\documentclass{article} 
\usepackage{iclr2026_conference,times}

\usepackage{mathtools}
\usepackage{bbm}
\usepackage{amssymb}
\usepackage{optidef}
\usepackage{float}
\usepackage{xcolor}
\usepackage{amsthm}
\usepackage{hyperref}
\usepackage{dsfont}

\newtheorem{lemma}{Lemma}
\newtheorem{corollary}{Corollary}
\newtheorem{theorem}{Theorem}
\newtheorem{definition}{Definition}

\newcommand{\rb}[1]{\left( #1 \right)}
\newcommand{\bb}[1]{\left[ #1 \right]}
\newcommand{\cb}[1]{\left\{ #1 \right\}}

\newcommand{\oov}[1]{\frac{1}{#1}}

\newcommand{\ip}[2]{\langle #1, #2 \rangle}
\newcommand{\R}{\mathbb{R}}

\newcommand{\E}[2][]{\mathbb{E}_{#1}\left[#2\right]} 

\newcommand{\mcal}[1]{\mathcal{#1}}

\newcommand{\pd}[2]{\frac{\partial #1}{\partial #2}}

\DeclareMathOperator*{\argmax}{arg\,max} 
\DeclareMathOperator*{\argmin}{arg\,min}

\newcommand{\piref}{\pi_{\text{ref}}}

\usepackage{tikz}
\usetikzlibrary{shapes,backgrounds,calc}  
\usepackage{booktabs}
\usepackage{multirow}
\usepackage{subcaption}

\usepackage{hyperref}
\usepackage{url}

\iclrfinalcopy

\title{Displacement-Resistant Extensions of DPO with Nonconvex $f$-Divergences}

\author{Idan Pipano\\
    Faculty of Data and Decision Sciences \\
    Technion - Israel Institute of Technology \\
    \texttt{idan.pipano@campus.technion.ac.il} \\
    \And
    Shoham Sabach \\
    Faculty of Data and Decision Sciences \\
    Technion - Israel Institute of Technology \\
    \texttt{ssabach@cornell.edu} \\
    \And
    Kavosh Asadi \\
    Meta \\
    \texttt{kavosh@meta.com}
    \And
    Mohammad Ghavamzadeh \\
    Qualcomm AI Research \\
    \texttt{ghavamza@qti.qualcomm.com}
}

\newcommand{\SquaredPO}{\textsc{SquaredPO}}
\newcommand{\ChiPO}{$\chi$PO}

\begin{document}

\maketitle

\begin{abstract}
DPO and related algorithms align language models by directly optimizing the RLHF objective: find a policy that maximizes the Bradley-Terry reward while staying close to a reference policy through a KL divergence penalty. Previous work showed that this approach could be further generalized: the original problem remains tractable even if the KL divergence is replaced by a family of $f$-divergence with a convex generating function $f$. Our first contribution is to show that convexity of $f$ is not essential. Instead, we identify a more general condition, referred to as DPO-inducing, that precisely characterizes when the RLHF problem remains tractable. Our next contribution is to establish a second condition on $f$ that is necessary to prevent probability displacement, a known empirical phenomenon in which the probabilities of the winner and the loser responses approach zero. We refer to any $f$ that satisfies this condition as displacement-resistant. We finally focus on a specific DPO-inducing and displacement-resistant $f$, leading to our novel \SquaredPO{} loss. Compared to DPO, this new loss offers stronger theoretical guarantees while performing competitively in practice.

\end{abstract}

\section{Introduction}
Language models (LMs) have emerged as a promising path towards achieving AGI. As they become increasingly integrated into real-world applications, it is important to ensure that LMs behave in accordance with human preferences. A major step towards enabling LM alignment was the reinforcement learning from human feedback (RLHF) framework~\citep{NIPS2017_d5e2c0ad}. In this context, the alignment problem is formulated as maximizing a learned Bradley-Terry (BT) reward signal while ensuring that the final LM does not deviate too much from a reference policy. This is a guardrail achieved by penalizing deviations through a KL-divergence penalty. A breakthrough was due to~\cite{rafailov2024rqlanguagemodel} who showed that the RLHF problem could be solved directly in a single step by leveraging the fact that (I) the problem has a closed-form solution, and (II) plugging the closed-form solution into the BT model removes intractable quantities, leading to simple optimization.

More recent work showed that the so-called DPO approach of ~\cite{rafailov2024rqlanguagemodel} can be generalized. In particular, \cite{DBLP:conf/iclr/WangJYLC24} showed that it is possible to maintain tractable optimization while moving from the KL divergence to a family of $f$-divergences. This use of \(f\)-divergences, not to be confused with minimizing the \(f\)-divergence between the learned model and a desired model \citep{go2023aligninglanguagemodelspreferences, han2024fPO}, is the focus of our work. More precisely, \cite{DBLP:conf/iclr/WangJYLC24} showed that if $f$ is convex, differentiable, and \(f'\) is invertible with \(0 \notin dom(f')\), then the new optimization problem remains directly solvable, akin to solving the original RLHF problem using DPO.

Our first contribution is to show that the RLHF problem remains tractable for even a broader class of functions than those discovered by~\cite{DBLP:conf/iclr/WangJYLC24}. In particular, we show that surprisingly one can relax the convexity assumption and that even for some non-convex functions $f$, one can get $f$-divergences that maintain tractability. In fact, we discover the full characterization of functions $f$ for which the original RLHF problem remains tractable. Referred to as DPO-inducing, the new condition is less restrictive than the class of functions discussed by prior work. 

The richness of the class of DPO-inducing functions gives us significant flexibility, but also raises a natural question: which specific choice of DPO-inducing $f$ is best equipped to improve alignment? To answer this question, we note that an empirical pitfall of DPO and related algorithms is \emph{probability displacement}, a phenomenon in which both winner and loser probabilities approach zero during optimization~\citep{fisch2025robustpreferenceoptimizationreward}.
Our second contribution is to show that within the class of DPO-inducing functions, there exists a subset of functions that provably hedge against probability displacement. We refer to such functions as displacement-resistant.

We finally focus on a novel loss, called \SquaredPO{}, which arises by using a nonconvex $f$-divergence that is both DPO-inducing and displacement-resistant. We show that optimizing this novel loss results in comparative empirical performance while enjoying robustness to over-optimization and empirically mitigating displacement. Together, our results suggest that DPO-inducing $f$-divergences can be used effectively for preference optimization, and that displacement-resistant $f$-divergences, in particular, are best suited for aligning LMs.

\section{Preliminaries}
A key step in training Language Models (LMs) is to align them with human preferences. In this task, we leverage a \emph{preference dataset} \(\mcal{D}\), often consisting of triplets \(\rb{x, y_w, y_l}\). Here, \(x\) is a prompt, and \(y_w, y_l\) are two responses, where a human, or a strong LLM, has annotated \(y_w\) to be a better response for the prompt \(x\) than \(y_l\). The response \(y_w\) is referred to as the \emph{chosen} or \emph{winner} response, and \(y_l\) as the \emph{rejected} or \emph{loser} response.

The original technique to solve this task was to perform a two-stage optimization \citep{NIPS2017_d5e2c0ad}. Namely, we first train a parameterized reward model \(r_\phi\) by minimizing the negative-log-likelihood loss:

\begin{equation}\label{eq: BT}
    \min_{\phi} \E[\rb{x,y_w,y_l} \sim \mcal{D}]{- \log \sigma \rb{r_\phi\rb{x, y_w} - r_\phi\rb{x, y_l}}} \; ,
\end{equation}

under the assumption that our preferences follow the Bradley-Terry (BT) \citep{BT} model, i.e., that there exists a reward function \(r\) such that the probability of \(y_w\) being preferred over \(y_l\) given a prompt \(x\) is \(\sigma\rb{r\rb{x, y_w} - r\rb{x, y_l}}\), where \(\sigma\rb{t} = 1 / \rb{1 + e^{-t}}\) is the sigmoid function. Here and throughout, $\log(\cdot)$ denotes the natural logarithm (base $e$).
The learned reward model \(r_\phi\) is then used by the PPO algorithm to optimize the following objective:

\begin{equation}\label{eq: RLHF objective}
    \max_{\theta} \E[x]{
        \E[y \sim \pi_\theta\rb{\cdot \mid x}]{r_\phi \rb{x, y}} - \beta \text{KL}\bb{\pi_\theta\rb{\cdot \mid x} \parallel \pi_{\text{ref}}\rb{\cdot \mid x}}
    } \; .
\end{equation}

Here, higher values of \(\beta > 0\) provide more incentive for the final policy to remain close to \(\piref\), where \(\piref\), called the reference policy, is the checkpoint from which optimization is initialized (often the SFT checkpoint).

Noting that Problem~\eqref{eq: RLHF objective} admits a closed-form solution, \citet{NEURIPS2023_a85b405e} derived a one-stage algorithm, called DPO, as an alternative for the two-stage RLHF optimization described above. Note that the closed-form solution of \eqref{eq: RLHF objective} is given by \(\pi_\theta\rb{y \mid x} = \piref\rb{y \mid x} \exp\rb{r_\phi\rb{x,y}/\beta} / Z\rb{x}\), where the so-called partition function \(Z\rb{x}\) does not depend on \(y\). This closed-form solution can be rewritten as:
\begin{equation}
    r_\phi\rb{x, y} = \beta \log \frac{\pi_\theta\rb{y \mid x}}{\piref \rb{y \mid x}} + \beta \log Z\rb{x} \; .
\end{equation}
Substituting this into \eqref{eq: BT} leads to the single-stage loss:

\begin{equation}\label{eq: DPO loss}
    \min_{\theta} \cb{
    \mcal{L}_{\text{DPO}}\rb{\theta} \coloneqq \E[\rb{x,y_w,y_l} \sim \mcal{D}]{- \log \sigma \rb{\beta \log \frac{\pi_\theta\rb{y_w \mid x}}{\piref \rb{y_w \mid x}} - \beta \log \frac{\pi_\theta\rb{y_l \mid x}}{\piref \rb{y_l \mid x}}}}
    } \; ,
\end{equation}
which can be minimized directly.

While simple and effective, DPO exhibits a counterintuitive empirical phenomenon. This so-called probability displacement phenomenon refers to the tendency of DPO to take the probability of both the preferred and the dispreferred responses to zero \citep{pang2024iterativereasoningpreferenceoptimization, Cal-DPO2024, shen2024aipoimprovingtrainingobjective, nvidia2024nemotron4340btechnicalreport, wu2025selfplay, pal2024smaugfixingfailuremodes, rafailov2024rqlanguagemodel, fisch2025robustpreferenceoptimizationreward, d-oosterlinck-etal-2025-anchored, asadi2025c2dpoconstrainedcontrolleddirect, razin2025unintentional}. While the research on this topic is active and ongoing, the theoretical understanding remains partial. Our goal is to advance this understanding by identifying the precise requirements that ensure both tractability and stability of DPO optimization.

\section{DPO with Non-convex \texorpdfstring{$f$}{f}-Divergences}\label{sec: theory}

Note that the RLHF objective \eqref{eq: RLHF objective} employs the KL divergence to measure the discrepancy between the learned policy and the reference policy. However, this discrepancy could be measured using a more general notion of distance, such as with the family of \(f\)-divergences \citep{DBLP:conf/iclr/WangJYLC24, DBLP:conf/iclr/HuangZXL0KF25} defined next \citep{csiszar1972class}. 

\begin{definition}
    Let \(f : \R_+ \to \R\) be a function\footnote{We note that traditionally \(f\)-divergences are defined for convex functions \(f\). However, as we shall see in \S~\ref{subsec: can}, in the context of direct preference optimization with \(f\)-divergence, convexity is not necessary.} such that \(f\rb{1}=0\). For two probability distributions \(\mathbf{p}, \mathbf{q} \in \Delta_n\), the \(f\)-divergence between \(\mathbf{p}\) and \(\mathbf{q}\) is defined as
    \begin{equation*}
        D_f\rb{\mathbf{p} \parallel \mathbf{q}} = \sum_{i=1}^n q_i f\rb{p_i / q_i}.
    \end{equation*}
\end{definition}

KL-divergence is an instantiation of \(f\)-divergence with \(f_{\text{KL}}\rb{t} = t \log t\). This naturally allows choosing various \(f\)-s to obtain various alternatives instead of the KL divergence in the RLHF problem. Therefore, we now focus on the following more general RLHF problem:
\begin{equation}\label{eq: RLHF objective f-divergence}
\max_{\pi_\theta} \E[x \sim \mcal{D}]{
    \E[y \sim \pi_\theta\rb{\cdot \mid x}]{r_\phi \rb{x, y}} - \beta D_f\bb{\pi_\theta\rb{\cdot \mid x} \parallel \pi_{\text{ref}}\rb{\cdot \mid x}}
} \; .
\end{equation}

However, recall that our desire is to keep the original RLHF problem tractable and easy to optimize, and so the choice of \(f\) needs to be made with this consideration in mind. Interestingly, \citet{DBLP:conf/iclr/WangJYLC24} show that for any convex \(f\) with invertible \(f'\) that satisfies \(0\notin \text{dom}\rb{f'}\), substituting the optimal solution of the generalized RLHF problem \eqref{eq: RLHF objective f-divergence} into the original BT model \eqref{eq: BT} yields the following DPO-like loss, referred to as \(f\)-DPO: 
\begin{equation}\label{eq: f-DPO loss}
    \min_{\theta}\cb{ \mcal{L}_{f\text{-DPO}}\rb{\theta} \coloneqq 
    \E[\rb{x,y_w,y_l} \sim \mcal{D}]{- \log \sigma \rb{\beta f'\rb{ \frac{\pi_\theta\rb{y_w \mid x}}{\piref \rb{y_w \mid x}}} - \beta f'\rb{ \frac{\pi_\theta\rb{y_l \mid x}}{\piref \rb{y_l \mid x}}}}}
    }\; .
\end{equation}

In the next section, we show that convexity is not necessary, meaning that there exist non-convex $f$ for which the problem remains tractable.
\subsection{Which \texorpdfstring{$f$}{f}-s \emph{Can} Be Used?}\label{subsec: can}
 We now provide a characterization that fully specifies the class of functions \(f\) for which the RLHF optimization problem is tractable. To this end, we introduce the following definition:

\begin{definition}\label{def: DPO-inducing}
    We say a function \(f:\R_+ \to \R\) is \emph{DPO-inducing} if substituting any optimal solution of \eqref{eq: RLHF objective f-divergence} into the BT model \eqref{eq: BT} yields the \(f\)-DPO loss \eqref{eq: f-DPO loss}.
\end{definition}
Our aim is then to characterize which functions \(f\) are DPO-inducing and which are not. The complete characterization is provided in Theorem \ref{theorem: interior inducing} in the Appendix. As the Theorem \ref{theorem: interior inducing} is rather technically involved, in Corollary \ref{corollary: interior-inducing iff L = - infty} we add a slight assumption (existence of the limit \(\lim_{t \to 0^+} f'\rb{t}\)) under which the characterization becomes simpler and straightforward to check.

\begin{corollary}\label{corollary: interior-inducing iff L = - infty}[Proof in Appendix \ref{app: proof for corollary interior-inducing iff L = - infty}]
    Let \(f:\R_+ \to \R\) be a continuous function in \(\R_+\), which is continuously differentiable in \(\R_{++}\)\footnote{We denote \(\R_+ = [0,\infty)\) and \(\R_{++}=(0,\infty)\).}. Assume that \(\lim_{t \to 0^+} f'\rb{t}\) exists (possibly \(\pm \infty\)).
    Then, \(f\) is DPO-inducing if and only if \(\lim_{t \to 0^+} f'\rb{t} = - \infty\).
\end{corollary}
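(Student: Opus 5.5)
The plan is to work directly with the per-prompt optimization problem underlying \eqref{eq: RLHF objective f-divergence}. For fixed \(x\), write \(q_i = \piref\rb{y_i \mid x} > 0\), \(r_i = r_\phi\rb{x,y_i}\) and \(p \in \Delta_n\). The problem then becomes maximizing \(\sum_i p_i r_i - \beta \sum_i q_i f\rb{p_i/q_i}\) over the simplex. This is a continuous function on a compact set, so a maximizer exists. The key observation is that \(f\) is DPO-inducing exactly when every maximizer \(p^*\) satisfies the stationarity relation \(r_i = \beta f'\rb{p^*_i/q_i} + \lambda\) for all \(i\), with a multiplier \(\lambda\) independent of \(i\). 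When this holds, substituting into the BT model cancels \(\lambda\) and yields \eqref{eq: f-DPO loss}. The whole task is therefore to show that this relation holds for all rewards precisely when \(\lim_{t\to 0^+} f'\rb{t} = -\infty\). Convexity never enters, because KKT necessary conditions do not require it.

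\textbf{Sufficiency.} Assume \(f'\rb{t}\to -\infty\) as \(t \to 0^+\) and let \(p^*\) be a maximizer. First I show \(p^*_i > 0\) for every \(i\). Suppose \(p^*_i = 0\); pick some \(j\) with \(p^*_j > 0\) and move mass \(\varepsilon\) from \(j\) to \(i\). The change in the \(i\)-th penalty term is \(-\beta q_i\bb{f\rb{\varepsilon/q_i} - f\rb{0}}\). By the mean value theorem, valid since \(f\) is continuous on \([0,\infty)\) and \(C^1\) on \(\R_{++}\), this equals \(-\beta \varepsilon f'\rb{\xi}\) for some \(\xi \in \rb{0,\varepsilon/q_i}\), and it is \(+\infty\cdot\varepsilon\) to first order. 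All other changes are \(O\rb{\varepsilon}\), so the objective strictly increases for small \(\varepsilon\), a contradiction. With \(p^*\) in the relative interior, the objective is differentiable at \(p^*\) and the only active constraint is \(\sum_i p_i = 1\). Lagrange's first-order condition then gives \(r_i - \beta f'\rb{p^*_i/q_i} = \lambda\) for all \(i\), which is exactly what is needed.

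\textbf{Necessity.} Assume instead that \(L := \lim_{t\to0^+} f'\rb{t} > -\infty\), with \(L\) possibly \(+\infty\). I will construct rewards for which some maximizer lies on the boundary and the f-DPO identity fails. Take \(n=2\) and let the gap \(r_1 - r_2 = \Delta\) be very large. Near \(p_2 = 0\), the derivative of the objective in the direction of increasing \(p_2\) is \(-\Delta - \beta f'\rb{p_2/q_2} + \beta f'\rb{p_1/q_1}\), evaluated at \(p_1 = 1 - p_2\). Because \(f'\) is bounded below near \(0\) (from \(L > -\infty\)) and continuous near \(1/q_1\), this derivative is negative on a small interval \([0,\delta]\) once \(\Delta\) is large. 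For the global picture, \(f\) is continuous on the compact interval \([0,1/q_2]\), so it is bounded there. Hence for \(\Delta\) large the point \(p_2 = 0\) beats every \(p_2 \ge \delta\), and the unique maximizer is \(p^* = \rb{1,0}\). In that case \(f'\rb{p^*_2/q_2}\) is either undefined or, if we read it as \(L\), fixed. The BT preference \(\sigma\rb{\Delta}\), however, varies with \(\Delta\), while the f-DPO expression \(\sigma\rb{\beta f'\rb{1/q_1} - \beta L}\) does not. So the substitution cannot reproduce \eqref{eq: f-DPO loss} for all rewards, and \(f\) is not DPO-inducing.

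I expect the main obstacle to be the necessity direction. It requires making precise what "yields the \(f\)-DPO loss" means when a maximizer sits on the boundary, and it requires a careful treatment of the case \(L = +\infty\), where the local derivative argument gives the opposite sign. In that case I would rely on the global comparison via boundedness of \(f\) on compacts rather than on local derivatives. Alternatively, I would reduce everything to the general characterization in Theorem \ref{theorem: interior inducing}, checking that its condition collapses to \(L=-\infty\) whenever the limit exists.
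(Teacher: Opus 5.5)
Your proposal is correct, but it takes a different route from the paper. The paper's proof is a two-line reduction, which is exactly the ``alternative'' in your last sentence. Since $\lim_{t\to0^+}f'(t)=L$ exists, the mean value theorem gives $\frac{f(t)-f(0)}{t}=f'(\xi_t)\to L$. Hence the difference quotient is unbounded below near $0$ if and only if $L=-\infty$. The claim then follows from Theorem~\ref{theorem: interior inducing}(ii); case (i) cannot occur because $f$ is real-valued.

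Your main argument instead reproves the characterization directly for the case where the limit exists.
\begin{itemize}
\item \textbf{Sufficiency.} This is the paper's ``$\Longleftarrow$'' argument, simplified. Since $f'(\xi)\to-\infty$ along every $\xi\to0^+$, you never need to extract a specific $\varepsilon_0$ from the unbounded-below condition.
\item \textbf{Necessity.} The paper uses a uniform $\mathbf{q}$ on $n$ points, a lower bound $m$ on the difference quotient over $(0,n]$, $D=\max_{[1,2n]}f'$, and a swap argument to force $p_1\ge 1/n$. You replace all of this with a two-point example. You combine a local mean-value bound on $[0,\delta]$, using that $f'$ is bounded below near $0$ and continuous near $1/q_1$, with a global bound from the boundedness of $f$ on a compact interval.
\item \textbf{Why a boundary maximizer breaks the $f$-DPO identity.} You make this more precise than Lemma~\ref{lemma: DPO-inducing iff interior-inducing} does. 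The maximizer $(1,0)$ stays fixed while the gap $\Delta$ ranges over an interval, so $\sigma(\Delta)$ cannot match the constant $\sigma\rb{\beta f'(1/q_1)-\beta L}$.
\end{itemize}

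The trade-off is this. Your route is self-contained and elementary, but it relies on the stronger pointwise behaviour of $f'$ near $0$. So it does not recover the general difference-quotient characterization that the paper obtains once for all $f$. Given that theorem, the paper's corollary is essentially free.

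Two concerns you raise are unfounded.
\begin{itemize}
\item The case $L=+\infty$ does not flip the sign. There $-\beta f'(p_2/q_2)\to-\infty$, which makes the directional derivative toward increasing $p_2$ even more negative. Your local argument only needs $f'$ bounded below near $0$, so it already covers this case without any separate global fallback.
\item The question of what ``yields the $f$-DPO loss'' means at a boundary maximizer is already settled by your own fixed-maximizer-versus-varying-$\Delta$ observation.
\end{itemize}
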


See Figure~\ref{fig: venn}, where functions inside and outside the DPO-inducing region are examples of functions that do and do not satisfy the condition in Corollary~\ref{corollary: interior-inducing iff L = - infty}, respectively. An interesting point about the proof of Theorem \ref{theorem: interior inducing} is that it uses the equivalence of the DPO-inducing property to a property we call \emph{interior-inducing} (see Definition~\ref{def: interior inducing} in the Appendix). In short, \(f\) is interior-inducing if optimal solutions to \eqref{eq: RLHF objective f-divergence} assign non-zero probability to all responses.

The importance of this result is two-fold. First, it reveals a broader family of functions \(f\) that can be used in \(f\)-DPO, not just convex functions as discussed by \citet{DBLP:conf/iclr/WangJYLC24}. Second, the result shows that failure to satisfy \(\lim_{t \to 0^+} f'\rb{t} = - \infty\) means that the optimization problem will become intractable.

\subsection{Which \texorpdfstring{$f$}{f}-s \emph{Should} Be Used?}\label{subsec: should}

Having characterized the class of DPO-inducing functions, i.e., the functions \(f\) that \emph{can} be used in \(f\)-DPO, we now show that some functions in this class are theoretically less favorable. To this end, given a preference dataset \(\mcal{D}\) and a prompt \(x\) from the dataset, we denote by \(S_x\) the set of responses \(y\) that appeared in the preference dataset as a response for \(x\) (either as a winner or as a loser). Consider the following optimization problem 
    \begin{equation}\label{eq: partial sum RLHF LM f-divergence}
    \max_{\pi_\theta} \E[x \sim \mcal{D}]{
        \E[y \sim \pi_\theta\rb{\cdot \mid x}]{r_\phi \rb{x, y}} - \beta \sum_{y \textcolor{red}{ \in S_x}} \piref\rb{y \mid x} f\rb{\frac{\pi_\theta\rb{y \mid x}}{\pi_{\text{ref}}\rb{y \mid x}}}
    } \; ,
    \end{equation}
    which differs from the modified RLHF objective \eqref{eq: RLHF objective f-divergence} only by the summation in the \(f\)-divergence term being only over \(S_x\) instead of over the entire response space. Note that Problem~\eqref{eq: partial sum RLHF LM f-divergence} might admit optimal solutions that are very different than those of Problem~\eqref{eq: RLHF objective f-divergence} (see Appendix~\ref{app: optimal solutions to the wierd problem}).

    We begin with Lemma~\ref{lemma: two problems lead to the same loss}, which is interesting in its own right and, as we will see in Lemma~\ref{lemma: decrease of in-sample ys}, also plays a key role in understanding the likelihood displacement phenomenon.
\begin{lemma}[Proof in Appendix \ref{app: proof of lemma two problems lead to the same loss}]\label{lemma: two problems lead to the same loss}
    Let \(f:\R_+ \to \R\) be a DPO-inducing function. Then, substituting any optimal solution to \eqref{eq: partial sum RLHF LM f-divergence} into the BT model \eqref{eq: BT} yields the same \(f\)-DPO loss \eqref{eq: f-DPO loss} as obtained from substituting any optimal solution to \eqref{eq: RLHF objective f-divergence} into the BT model. 
\end{lemma}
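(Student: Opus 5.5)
The plan is to derive the KKT/first-order conditions for \eqref{eq: partial sum RLHF LM f-divergence} prompt by prompt and show that, restricted to the in-sample responses $y\in S_x$, they have the same form as those of \eqref{eq: RLHF objective f-divergence}. Both objectives separate over $x$, so I fix $x$ and write $p_y=\pi_\theta(y\mid x)$, $q_y=\piref(y\mid x)$, and $r_y=r_\phi(x,y)$. For a fixed $x$, the problem is to maximize
\[
\sum_y p_y r_y-\beta\sum_{y\in S_x} q_y f(p_y/q_y)
\]
over the simplex. The BT model only uses reward differences $r(x,y_w)-r(x,y_l)$ with $y_w,y_l\in S_x$. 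It therefore suffices to show that any optimal $p$ satisfies
\[
r_y=\beta f'(p_y/q_y)+c(x)\quad\text{for all } y\in S_x,
\]
with a constant $c(x)$ that does not depend on $y$. Substituting this into \eqref{eq: BT} cancels $c(x)$ and gives exactly \eqref{eq: f-DPO loss}. By Definition~\ref{def: DPO-inducing}, this is also the loss produced by \eqref{eq: RLHF objective f-divergence}.

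The first step is to show that any optimal solution has $p_y>0$ for every $y\in S_x$. Here I would use the DPO-inducing hypothesis through the interior-inducing characterization (Theorem~\ref{theorem: interior inducing}). In the simpler setting of Corollary~\ref{corollary: interior-inducing iff L = - infty}, this means $f'(t)\to-\infty$ as $t\to0^+$. Suppose $p_{y_0}=0$ for some $y_0\in S_x$. Move a small mass $\varepsilon$ onto $y_0$, taking it from some $y_1$ with $p_{y_1}>0$. The penalty term changes by $-\beta q_{y_0}\bigl(f(\varepsilon/q_{y_0})-f(0)\bigr)$, whose derivative in $\varepsilon$ tends to $+\infty$. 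The change at $y_1$ is bounded in magnitude. If $y_1\notin S_x$, that change is just linear in $r_{y_1}$; if $y_1\in S_x$, it is bounded by continuity of $f'$ at $p_{y_1}/q_{y_1}>0$. So the perturbation strictly improves the objective for small $\varepsilon$, contradicting optimality. This uses the mean value theorem together with continuity of $f$ at $0$.

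The second step treats the optimal $p$ as an interior point relative to the coordinates in $S_x$. The only constraints that bind on these coordinates are $\sum_y p_y=1$ and the nonnegativity of responses outside $S_x$. Perturb mass between two responses $y,y'\in S_x$, both of which are positive by the first step. Stationarity gives $r_y-\beta f'(p_y/q_y)=r_{y'}-\beta f'(p_{y'}/q_{y'})$, which is the required relation with common constant $c(x)$. One also has to check that an optimal solution exists and that $S_x$ actually receives mass. Neither is strictly needed, since the statement concerns \emph{any} optimal solution, and the interiority argument already forces positive mass on $S_x$. Outside $S_x$ the objective is linear, so all remaining mass goes to maximal-reward responses, and those coordinates do not affect the derived loss.

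I expect the main obstacle to be the first step when only the general DPO-inducing property is assumed, without the existence of $\lim_{t\to0^+}f'(t)$. In that case I would rely on the equivalence between DPO-inducing and interior-inducing from Theorem~\ref{theorem: interior inducing}. The idea is to transfer its defining condition (presumably a $\liminf$/$\limsup$-type condition on difference quotients of $f$ near $0$) to the partial-sum problem. This should work because the perturbation argument only involves $f$ near $0$ at coordinates in $S_x$, plus linear terms elsewhere, and that local structure is the same in the full problem \eqref{eq: RLHF objective f-divergence}.
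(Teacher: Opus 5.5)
Your proposal is correct and follows essentially the same route as the paper. The paper first argues, via what it calls a straightforward adaptation of the perturbation argument in the proof of Theorem~\ref{theorem: interior inducing}, that every optimal solution of \eqref{eq: partial sum RLHF LM f-divergence} is strictly positive on $S_x$; it then applies the two-coordinate stationarity result (Lemma~\ref{lemma: derivative equal for positive entries}) to obtain $r_\phi(x,y_w)-r_\phi(x,y_l)=\beta f'(\cdot)-\beta f'(\cdot)$ and substitutes this into the BT model. Your guess about the general condition is also right: Theorem~\ref{theorem: interior inducing} requires either $f(0)=\infty$ (where interiority on $S_x$ is immediate) or that $(f(t)-f(0))/t$ be unbounded below near $0$. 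In the latter case your perturbation argument still works if you pick $\varepsilon$ at a point where that quotient is sufficiently negative, rather than relying on the quotient tending to $-\infty$.
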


Lemma \ref{lemma: two problems lead to the same loss} reveals a counterintuitive and concerning property of \(f\)-DPO: while it solves the two RLHF losses \eqref{eq: RLHF objective f-divergence} and \eqref{eq: BT}, it is also true that it solves the two losses \eqref{eq: partial sum RLHF LM f-divergence} and \eqref{eq: BT}. One way to see why this result is concerning is that Problem~\eqref{eq: partial sum RLHF LM f-divergence} is conceptually problematic, as its regularization term acts only over in-sample responses \(y \in S_x\), while intuitively we want to regularize \(\pi_\theta\) to remain close to \(\piref\) over the entire response space. To gain additional insight, in Appendix~\ref{app: optimal solutions to the wierd problem} we find the set of optimal solutions to this problem in the special case where \(f\) is convex. Another aspect, which holds regardless of the convexity of \(f\), is that based on the value of \(\argmin_{t \in \R_+} f\rb{t}\), the optimal solutions to \eqref{eq: partial sum RLHF LM f-divergence} could have undesirable properties, as Lemma \ref{lemma: decrease of in-sample ys} shows.

\begin{lemma}[Proof in Appendix \ref{app: proof of lemma decrease of in-sample ys}]\label{lemma: decrease of in-sample ys}
    Let \(f\) be a DPO-inducing function with a unique global minimum at \(c \in (0, 1]\). Suppose \( r_\phi\rb{x, y} \le \max_{y' \notin S_x} r_\phi\rb{x, y'}\) for all \(y \in S_x\). Then, any optimal solution to \eqref{eq: partial sum RLHF LM f-divergence} satisfies
    \begin{equation}\label{eq: theta le c times ref}
        \pi_\theta\rb{y \mid x} \le c \cdot \piref\rb{y \mid x},
    \end{equation}
    for any prompt \(x\) and in-sample response \(y \in S_x\).
\end{lemma}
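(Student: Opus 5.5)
The plan is a direct exchange (mass-transfer) argument, carried out prompt by prompt. As in the derivation of the closed-form solution of \eqref{eq: RLHF objective}, I treat \(\pi_\theta\) as an arbitrary conditional distribution, so the objective of \eqref{eq: partial sum RLHF LM f-divergence} is an expectation over \(x\) of terms that each depend only on \(\pi_\theta(\cdot \mid x)\). Hence any optimal solution must be optimal for every prompt \(x\) in the support of \(\mcal{D}\) separately. For a fixed \(x\), the per-prompt objective is
\begin{equation*}
J_x(\pi) = \sum_{y} \pi(y) r_\phi(x,y) - \beta \sum_{y \in S_x} \piref(y \mid x) f\rb{\frac{\pi(y)}{\piref(y \mid x)}},
\end{equation*}
maximized over \(\pi \in \Delta\). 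The key structural observation is that out-of-sample responses \(y \notin S_x\) are not penalized at all: they contribute only linearly through the reward.

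Suppose, for contradiction, that an optimal \(\pi\) has some \(y_0 \in S_x\) with \(\pi(y_0) > c\,\piref(y_0 \mid x)\). Let \(y^\ast \in \argmax_{y' \notin S_x} r_\phi(x,y')\); this maximizer exists, since the hypothesis is stated with a \(\max\). Set \(\delta = \pi(y_0) - c\,\piref(y_0 \mid x) > 0\). Define \(\tilde\pi\) by moving mass \(\delta\) from \(y_0\) to \(y^\ast\), that is, \(\tilde\pi(y_0) = c\,\piref(y_0\mid x)\) and \(\tilde\pi(y^\ast) = \pi(y^\ast)+\delta\), with all other entries unchanged. Then \(\tilde\pi\) is nonnegative and sums to one, so it is feasible.

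Next I compare the two objectives. The reward term changes by \(\delta\rb{r_\phi(x,y^\ast) - r_\phi(x,y_0)} \ge 0\), by the hypothesis \(r_\phi(x,y_0) \le \max_{y'\notin S_x} r_\phi(x,y')\). Since \(y^\ast \notin S_x\), only the \(y_0\) summand of the penalty changes. It changes from \(\piref(y_0\mid x) f(t_0)\) with \(t_0 = \pi(y_0)/\piref(y_0\mid x) > c\) to \(\piref(y_0\mid x) f(c)\). Because \(c\) is the unique global minimizer of \(f\) and \(t_0 \neq c\), we get \(f(c) < f(t_0)\). As \(\piref(y_0\mid x) > 0\) (implicit in the ratios being defined) and \(\beta>0\), the penalty strictly decreases. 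Therefore \(J_x(\tilde\pi) > J_x(\pi)\), contradicting optimality. This gives \eqref{eq: theta le c times ref} for every in-sample \(y\).

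I do not expect a serious obstacle; the argument never needs the DPO-inducing property or \(c \le 1\) beyond ensuring the statement is meaningful. The only points needing care are bookkeeping ones. First, the reduction from the expectation over \(x\) to per-prompt optimality must be restricted to prompts with positive probability under \(\mcal{D}\). Second, \(y^\ast \notin S_x\) must be a genuinely distinct response from \(y_0\), so that the transfer does not alter any penalized term. Both follow directly from the setup.
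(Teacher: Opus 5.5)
Your proof is correct and follows the paper's argument essentially verbatim. You move the excess mass $\pi_\theta(y\mid x) - c\,\piref(y\mid x)$ from the in-sample response to a maximal-reward out-of-sample response $\hat y$; the reward term does not decrease, and the penalty strictly decreases because $c$ is the unique minimizer. Your side remarks are also accurate: the per-prompt reduction, that feasibility needs only $\pi_\theta(y\mid x) > c\,\piref(y\mid x)$ rather than $c \le 1$, and that the DPO-inducing property is never used.
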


Lemma \ref{lemma: decrease of in-sample ys} shows that under mild assumptions\footnote{To see why the assumption in Lemma \ref{lemma: decrease of in-sample ys} is mild, notice that it is equivalent to requiring the existence of a response \(y' \notin S_x\) with higher reward than \(y\). As \(S_x\) typically consists of only two responses, its complement covers almost the entire response space, making the condition very likely to hold.}, if we use any \(f\) for which \(\argmin_{t \in \R_+} f(t) < 1\),  optimal solutions to \eqref{eq: partial sum RLHF LM f-divergence} suffer from likelihood displacement. As a special case, consider \(f_{\text{KL}}\rb{t}=t \log t\), which is the generating function for the KL divergence and hence corresponds to the original loss of DPO. Since \(\argmin_{t \in \R_+} f_{\text{KL}}\rb{t} = e^{-1}\), we get a probability decrease in the in-sample responses by a factor of at least \(e^{-1}\). Although a similar result was shown in previous work \citep{asadi2025c2dpoconstrainedcontrolleddirect}, it was only for DPO, whereas here we prove this for general \(f\). The importance of generalizing this statement to a general \(f\) is that from Lemma \ref{lemma: decrease of in-sample ys} arises a theoretically grounded way to mitigate the likelihood displacement issue - choosing an \(f\) for the \(f\)-divergence that satisfies \(\argmin_{t \in \R_+} f(t) \ge 1\). We refer to any $f$ that satisfies this property as a \emph{displacement-resistant} function.

To summarize this section, our theory imposes two desiderata for the function \(f\) of the \(f\)-divergence that nicely translate to easy-to-check mathematical properties: it should be DPO-inducing, i.e., \(\lim_{t \to 0^+} f'\rb{t} = - \infty\), and it should be displacement-resistant, i.e., \(\argmin_{t\in\R_+} f(t) \ge 1\). In the next section, we propose one specific \(f\) that satisfies our two desiderata, discuss the resulting loss, and report empirical evaluations.

\def \setAcenter {210:0.7}
\def \setBcenter {-30:0.7}
\def \setCcenter {90:0.7}
\def \setR {2.2cm} 
\def \setU{ (-3,-2.9) rectangle (3, 3.3) }

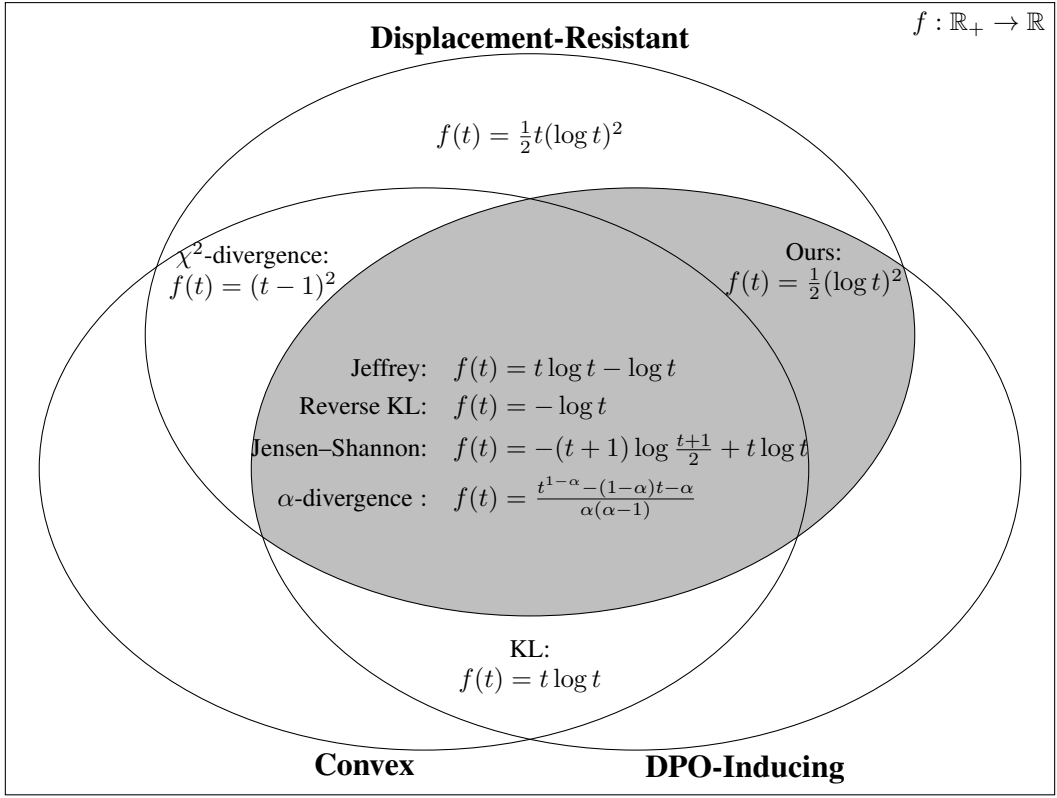
\begin{figure}[ht]
    \centering
    
    \begin{tikzpicture}[xscale=2.325, yscale=1.7, every node/.style={font=\fontsize{9.6}{11.5}\selectfont}]
    \draw \setU node[below left]{$f:\R_+ \to \R$};

    \begin{scope}
      \clip (\setAcenter) circle (\setR);
      (\setBcenter) circle (\setR);
    \end{scope}
    \begin{scope}
      \clip (\setBcenter) circle (\setR);
      \fill[gray!50] (\setCcenter) circle (\setR);
    \end{scope}

    \draw (\setAcenter) circle (\setR);
    \draw (\setBcenter) circle (\setR);
    \draw (\setCcenter) circle (\setR);

    \node[font=\large\bfseries, anchor=east] at ($(210:0.7)+(0,-\setR*1.05)$) {Convex};
    \node[font=\large\bfseries, anchor=west] at ($(-30:0.7)+(0,-\setR*1.07)$) {DPO-Inducing};
    \node[font=\large\bfseries] at ($(90:0.7)+(90:\setR+3)$) {Displacement-Resistant};

    \node[anchor=west, align=center] at (1.06,1.2) {Ours: \\ $f(t)=\tfrac{1}{2}(\log t)^2$};
    \node[anchor=east, align=center] at (-1.05,1.2) {$\chi^2$-divergence: \\ $f(t)=(t-1)^2$};
    \node[anchor=center, align=center] at (0, -1.9) {KL: \\ $f(t)=t \log t$};
    \node[anchor=center, align=center] at (0, 2.25) {$f(t)=\tfrac{1}{2} t (\log t)^2$};

\node[inner sep=0pt] at (0,-0.1) {
  \(\begin{aligned}
    \text{Jeffrey:}         &\quad f(t) = t \log t - \log t \\
    \text{Reverse KL:}      &\quad f(t) = - \log t \\
    \text{Jensen--Shannon:} &\quad f(t) = -(t+1)\log \tfrac{t+1}{2} + t \log t \\
    \alpha\text{-divergence}: 
                            &\quad f(t) = \tfrac{t^{1-\alpha} - (1 - \alpha) t - \alpha}{\alpha (\alpha-1)}
  \end{aligned}\)
};

    \end{tikzpicture}
    
    \caption{A Venn diagram illustrating a taxonomy of some generating functions \(f\). The diagram includes classical examples of \(f\)-s used in \(f\)-divergences, as well as the functions \(f(t)=\tfrac{1}{2}(\log t)^2\) and \(f(t)= \tfrac{1}{2} t (\log t)^2\), which correspond to a Monte Carlo approximation of KL proposed by \citet{schulman2020kl}. DPO-inducing refers to Definition~\ref{def: DPO-inducing} and displacement-resistant refers to the condition $1\le\argmin_{t \in \R_+} f\rb{t}$ proposed in \S\ref{subsec: should} to mitigate likelihood displacement. The gray area is the intersection of these two sets of functions. 
    }
    \label{fig: venn}
\end{figure}

\paragraph{A note on convexity} Notice that as long as \(f\) satisfies our two properties, there seems to be no reason to require \(f\) to be convex. In fact, recall that a main reason to require \(f\) to be convex in \(f\)-divergences is that together with Jensen's inequality and the assumption \(f(1)=0\), it guarantees that the divergence is non-negative. However, our result in Lemma~\ref{lemma: two problems lead to the same loss} undermines this logic, as in Problem~\eqref{eq: partial sum RLHF LM f-divergence} the summation in the `divergence' term is over a strict subset of the response space. As a result, Jensen's inequality does not apply, hence convexity of \(f\) no longer guarantees the non-negativity of this term.

\section{\SquaredPO}\label{sec: squaredpo}

Our proposed method is the instantiation of the \(f\)-DPO loss \eqref{eq: f-DPO loss} obtained by choosing \(f_{\SquaredPO}\rb{t}\coloneqq(\log t)^2/2\). We refer to the resulting algorithm as \SquaredPO. The function \(f_{\text{\SquaredPO}}\) is one possibility for a function that satisfies the two properties suggested in \S\ref{sec: theory}, as demonstrated in Figure~\ref{fig: venn}. This particular function is both DPO-inducing and displacement-resistant thus making it a natural choice for preference optimization. Note that we deliberately chose a non-convex \(f\), in order to explore the broader (compared to prior work) class of functions that Corollary~\ref{corollary: interior-inducing iff L = - infty} enables. We note that non-convex losses have proven effective in the past for preference alignment, e.g., LLM-proposed losses presented in \citet{discopop_NIPS2024}. However, we are the first to explore \(f\)-DPO \eqref{eq: f-DPO loss} with non-convex functions \(f\).

The explicit formula for the \SquaredPO{} loss, obtained by substituting \(f_{\text{\SquaredPO}}'\rb{t}=(\log t)/t\) in \eqref{eq: f-DPO loss}, is given by
\begin{equation}\label{eq: squaredpo loss}
    \mcal{L}_{\text{SquaredPO}} \coloneqq \E[\rb{x,y_w,y_l} \sim \mcal{D}]{- \log \sigma \rb{
    \beta_\theta\rb{y_w, x} \log \rb{ \tfrac{\pi_\theta\rb{y_w \mid x}}{\piref \rb{y_w \mid x}}}
    -
    \beta_\theta\rb{y_l, x}\log \rb{ \tfrac{\pi_\theta\rb{y_l \mid x}}{\piref \rb{y_l \mid x}}}
    }} \; ,
\end{equation}
where \(\beta_\theta\rb{y, x} \coloneqq \beta / \tfrac{\pi_\theta\rb{y \mid x}}{\piref\rb{y \mid x}}\), and \(\beta > 0\) is as usual the regularization hyperparameter. Compared to the original DPO loss \eqref{eq: DPO loss}, the \SquaredPO{} loss can be viewed as ``DPO with adaptive \(\beta\)-s''. There is a connection between our approach and some of the recent approaches \citep{meng2024simpo, wu2024betadpo, lee2025klpenaltycontrolperturbation} in that we both use adaptive \(\beta\)-s, but here the adaptive \(\beta\)-s are derived from theory and first principles rather than introduced heuristically. The adaptive \(\beta\)-s in \SquaredPO{} can act as a safeguard against severe likelihood displacement; when \(\pi_\theta\rb{y_w \mid x}\) decreases, as is often observed in practice, \(\beta_\theta\rb{y_w, x}\) increases. An increase in the effective regularization coefficient intuitively means stronger regularization, thereby counteracting the decrease in probability.

\section{Experiments}\label{sec: experiments}

In this section, we validate our theoretical findings regarding the ability of \SquaredPO{} to mitigate likelihood displacement, demonstrate that it is robust to over-optimization compared to DPO, and that it performs competitively on standard benchmarks. 
\subsection{Experimental setup} 

\paragraph{Model and Dataset} Our reference model \(\piref\), i.e., the model from which we initialize preference optimization, is \href{https://huggingface.co/meta-llama/Meta-Llama-3-8B-Instruct}{Meta-Llama-3-8B-Instruct} \citep{llama3modelcard}.
The preference dataset we use to align \(\piref\) is \href{https://huggingface.co/datasets/trl-lib/tldr-preference}{TL;DR} \citep{volske-etal-2017-tl}. We use a version of this dataset provided by \citet{vonwerra2022trl} that is preprocessed for preference optimization. The dataset consists of Reddit posts from various topics, each paired with a preferred and a dispreferred summary.


\paragraph{Evaluation} We compare \SquaredPO{} with DPO on three axes. First, we want to empirically probe our theoretical findings that \SquaredPO{} is more resistant to likelihood displacement than DPO. This is done by computing the chosen log-ratios, \(\log \rb{\pi_\theta \rb{y_w \mid x} / \piref\rb{y_w \mid x}}\), for all training samples \((x, y_w, y_l)\in \mcal{D}\), in different checkpoints throughout training. Second, we measure the performance of each method on the dataset's validation split, reporting win rates as judged by GPT-4. Lastly, to assess \SquaredPO{}'s ability to generalize to out-of-distribution tasks, we compare its performance against DPO on standard evaluation benchmarks.

The evaluation benchmarks we considered are AlpacaEval 2 \citep{alpaca_eval, dubois2025lengthcontrolledalpacaevalsimpleway} and MT-Bench \citep{10.5555/3666122.3668142}. AlpacaEval 2 contains $805$ single-turn prompts, where GPT-4 judges\footnote{We use OpenAI's gpt-4o instead of the default judge gpt-4-1106-preview for reduced API costs. \citet{dubois2025lengthcontrolledalpacaevalsimpleway} show that the induced rankings are very robust to the choice of the judge.} compute both win rate (fraction of cases a model beats a strong baseline) and a length-controlled win rate that corrects for verbosity bias. MT-Bench consists of $80$ multi-turn questions across diverse categories, scored on a 1–10 scale by GPT-4.

\subsection{Experimental Results}\label{subsec: experimental results}

We align \href{https://huggingface.co/meta-llama/Meta-Llama-3-8B-Instruct}{Meta-Llama-3-8B-Instruct} on the \href{https://huggingface.co/datasets/trl-lib/tldr-preference}{TL;DR} dataset over four training epochs using LoRA \citep{hu2022lora}. Full hyperparameter and implementation details are provided in Appendix~\ref{app: hyperparameters and implementation details}.

\paragraph{Performance on The Validation Set} We sample a set of $512$ examples from the validation split of the dataset. Inspired by the experimental setting in \citet{DBLP:conf/iclr/HuangZXL0KF25}, for each epoch in \(\cb{1,2,4}\) we use the checkpoint obtained at the end of this epoch to generate responses to each of the $512$ samples. We then take two approaches to assess the performance of our method. First, in Figure~\ref{fig: head2head against DPO on TL DR validation} we use GPT-4 to compute the win rate of \SquaredPO{} against DPO. Second, in Table~\ref{table:wr_results} we use GPT-4 to compute win rates of each of the two methods against the reference model, i.e. the model before alignment. In Table~\ref{table:wr_results}, we include another baseline, namely \ChiPO{}  \citep{DBLP:conf/iclr/HuangZXL0KF25}.

In both approaches, it is evident that \SquaredPO{} is robust to over-optimization. In Table~\ref{table:wr_results} we see that while DPO's win rate against the reference model drops substantially below \(50\%\) already in the second epoch, this is not the case for our loss. The gap in performance grows with the number of epochs. The table also shows that \SquaredPO{} is more robust to over-optimization than \ChiPO{}. In Figure~\ref{fig: head2head against DPO on TL DR validation} we see the same phenomenon from a different angle: when computing the win rate of \SquaredPO{} against DPO, \SquaredPO{} achieves statistically significant improvements over DPO in regimes prone to over-optimization (i.e., after two epochs or more).

One can see in Figure~\ref{fig: head2head against DPO on TL DR validation} and Table~\ref{table:wr_results} that, albeit not in a statistically significant manner, DPO outperforms \SquaredPO{} on the first epoch. In this context, it is worth emphasizing that the hyperparameters used for training were standard choices for DPO (e.g. \(\beta=0.01\)), and we did not tune them for our loss. This suggests that with appropriate hyperparameter tuning, \SquaredPO{} may potentially surpass DPO also in the first epoch.

\begin{table}[ht]
\centering
\caption{Win Rate on TL;DR's validation set against the base model Meta-Llama-3-8B-Instruct when trained on TL;DR. These results demonstrate \SquaredPO{}'s robustness to over-optimization compared to DPO and \ChiPO{}.}
\label{table:wr_results}
\begin{tabular}{c c c c}
\toprule
Epochs & \SquaredPO{} winrate (\%) & \ChiPO{} winrate (\%) & DPO winrate (\%) \\
\midrule
1 & 50.8 ± 0.7 & 51.2 ± 0.8 & 51.8 ± 1.0 \\
2 & 50.6 ± 1.1 & 48.9 ± 1.1 & 45.0 ± 1.1 \\
4 & 51.0 ± 0.7 & 48.3 ± 1.1 & 34.7 ± 1.3 \\
\bottomrule
\end{tabular}
\end{table}

\begin{figure}
    \centering
    \includegraphics[width=0.7\linewidth]{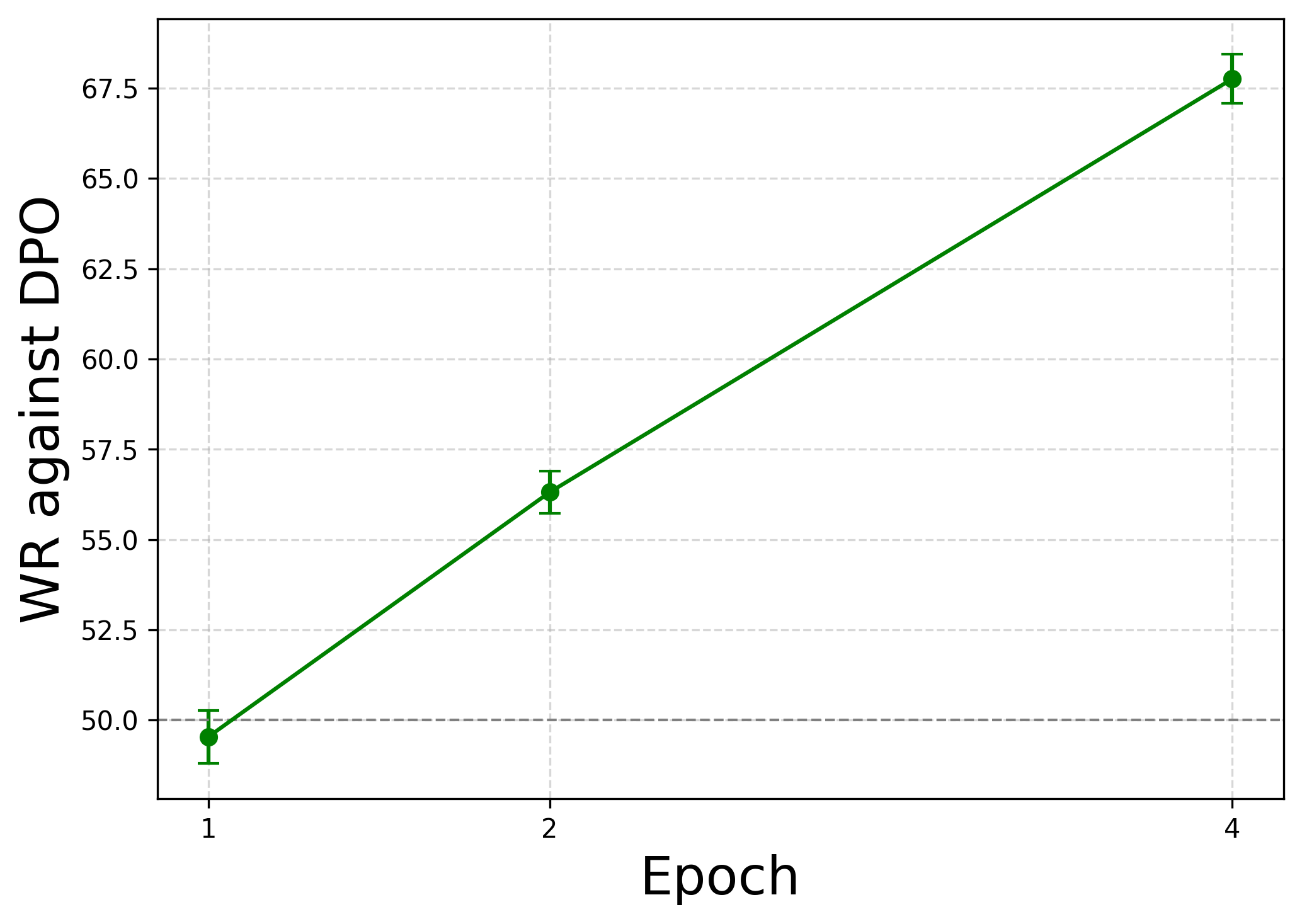}
    \caption{
    Head-to-head win rate of \SquaredPO{} against DPO on TL;DR's validation split across training epochs when using these methods to finetune Meta-Llama-3-8B-Instruct on TL;DR for $4$ epochs. Error bars over \(10\) seeds are reported.}
    \label{fig: head2head against DPO on TL DR validation}
\end{figure}

\paragraph{Standard Benchmarks}

In Table~\ref{table: Benchmark tldr llamainstruct} we present benchmark results for the models obtained by training Meta-Llama-3-8B-Instruct for one epoch on TL;DR with \SquaredPO{}, \ChiPO{} or DPO. Each AlpacaEval number is averaged over 10 seeds with confidence intervals. The MT-Bench score is an average over eight categories; the per-category breakdown is shown in Figure~\ref{fig:llama tldr mtbench 1 epoch}. Additional implementation details can be found in Appendix~\ref{app: hyperparameters and implementation details}. The results in Table~\ref{table: Benchmark tldr llamainstruct} show that \SquaredPO{} is almost on par with DPO on these benchmarks, with DPO slightly outperforming (it is worth emphasizing that we did not tune hyperparameters for \SquaredPO{}). \SquaredPO{} is on par with \ChiPO{} on AlpacaEval and is more performant than \ChiPO{} on MT-Bench.

\begin{table}
\centering
\caption{Benchmark results for our \SquaredPO{} and two baselines. 
For AlpacaEval 2, we report both raw win rate (WR) and length-controlled win rate (LC).}
\label{table: Benchmark tldr llamainstruct}
\begin{tabular}{lccc}
\toprule
\multirow{2}{*}{\textbf{Method}} 
  & \multicolumn{2}{c}{\textbf{AlpacaEval 2}} 
  & \multicolumn{1}{c}{\textbf{MT-Bench}} \\
\cmidrule(lr){2-3} \cmidrule(lr){4-4}
 & LC (\%) & WR (\%) & Score (1-10) \\
\midrule
\SquaredPO{} & 29.2 ± 0.4 & 24.5 ± 0.3 & 7.924 \\
\ChiPO{} & 29.3 ± 0.6 & 24.3 ± 0.4 & 7.900 \\
DPO & 29.6 ± 0.4 & 24.8 ± 0.4 & 7.925 \\
\bottomrule
\end{tabular}
\end{table}

\paragraph{Likelihood Displacement Mitigation}

\begin{figure}
    \centering
    \begin{minipage}[t]{0.48\linewidth}
        \centering
        \includegraphics[width=\linewidth]{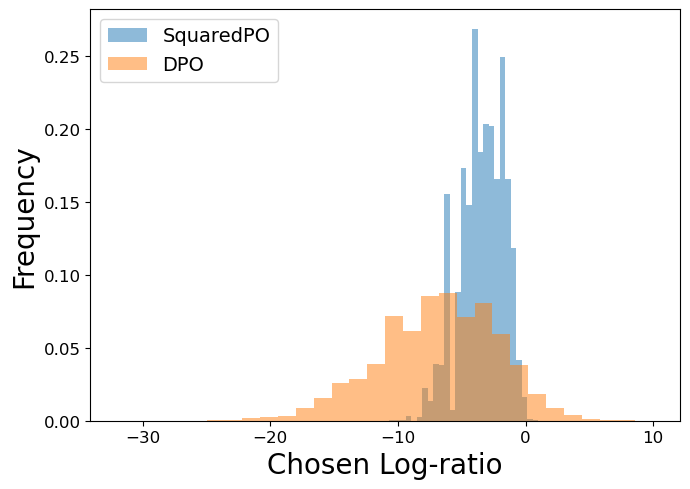}
    \end{minipage}\hfill
    \begin{minipage}[t]{0.48\linewidth}
        \centering
        \includegraphics[width=\linewidth]{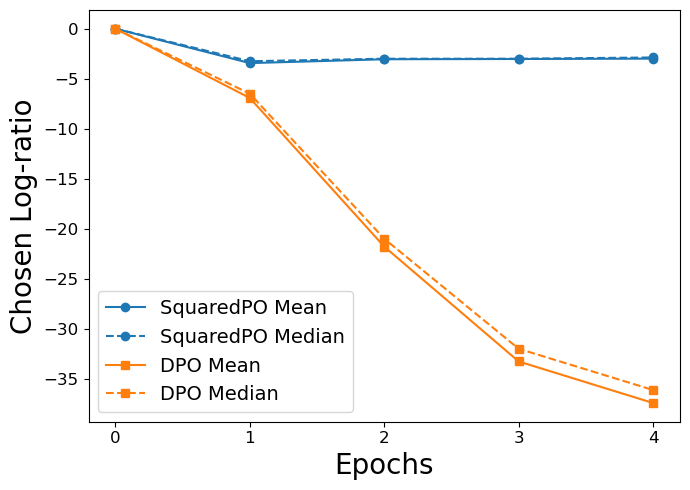}
    \end{minipage}
    \caption{Left: histograms of chosen log-ratios \(\log\rb{\pi_\theta(y_w \mid x)/\piref(y_w \mid x)}\) for all chosen responses in the training set \(\mcal{D}\) after one epoch of training. With \SquaredPO{}, likelihood displacement is less severe; probabilities decrease less than under DPO. Right: the evolution of the mean and the median chosen log-ratio over epochs of training.}
    \label{fig:winner_logratios_hist_and_mean}
\end{figure}

To gauge \SquaredPO{}'s ability to mitigate likelihood displacement, i.e. the phenomenon where winner probabilities decrease during preference optimization, we begin with the histograms in Figure~\ref{fig:winner_logratios_hist_and_mean} (left), which visualize the distribution of the values \(\cb{\log\rb{\pi_\theta\rb{y_w \mid x} / \piref\rb{y_w \mid x}}}_{(x, y_w, y_l)\in\mcal{D}}\), dubbed `chosen log-ratios'. This value is negative for winners that suffered from probability displacement. In Figure~\ref{fig:winner_logratios_hist_and_mean}, \(\pi_\theta\) is the model obtained after one training epoch. We see that \SquaredPO{} is able to mitigate the likelihood displacement issue: while winner probabilities still decrease, the magnitude of the decrease is smaller. Notably, the most extreme decreases observed under DPO are absent in \SquaredPO{}.

Beyond analyzing the snapshot after one epoch of training, we also examine how likelihood displacement evolves over the course of training. In Figure~\ref{fig:winner_logratios_1to4} in the appendix, we provide histograms for the chosen log-ratios of the models obtained from $1,2,3$ and $4$ epochs of alignment. These results indicate that the mitigation of extreme displacement by \SquaredPO{} grows more pronounced with additional training. At an aggregate level, the right panel of Figure~\ref{fig:winner_logratios_hist_and_mean} presents the mean and median chosen log-ratios for each epoch, demonstrating that the decrease in DPO is by far more radical than in \SquaredPO{}.

Lastly, we report an intriguing `monotonicity' phenomenon we observe in the training dynamics of DPO: we find that in our experimental setting, nearly all (\(99.63\%\)) of the chosen responses whose probability decreased in the first epoch, continued decreasing monotonically in the three subsequent epochs. That is, for nearly all responses whose probability reduced from the reference checkpoint to the first epoch, the probability was also reduced from the first epoch to the second epoch and so on. While the literature contains many reports of the likelihood displacement phenomenon, we are, to the best of our knowledge, the first to report this monotonicity phenomenon, which is checked on a per-winner basis.

This phenomenon is substantially reduced under \SquaredPO{}, as only \(4.21\%\) of chosen responses that decreased in the first epoch continued to decrease monotonically thereafter. This empirical observation is in accordance with our mechanistic understanding of the \SquaredPO{} loss (\S\ref{sec: squaredpo}): when a winner $y_w$ suffers from likelihood displacement, its effective regularization coefficient $\beta_\theta(y_w, x)$ increases. This strengthens the regularization on that response and intuitively encourages its probability to return toward $\pi_{\text{ref}}(y_w \mid x)$, thereby breaking the downward monotonicity. See Figure~\ref{fig:per_winner_logratios_dpo_and_sq} in the appendix for the evolution of log-ratios across training for ten individual winners, and Table~\ref{tab:monotonic-decrease} for additional details on monotonicity in several experimental settings.

\section{Related Work}

The likelihood displacement phenomenon has been documented and analyzed in a growing body of work \citep{pang2024iterativereasoningpreferenceoptimization, Cal-DPO2024, shen2024aipoimprovingtrainingobjective, nvidia2024nemotron4340btechnicalreport, wu2025selfplay, pal2024smaugfixingfailuremodes, rafailov2024rqlanguagemodel, fisch2025robustpreferenceoptimizationreward, d-oosterlinck-etal-2025-anchored, asadi2025c2dpoconstrainedcontrolleddirect, razin2025unintentional}. The connection between likelihood displacement and performance is discussed, e.g., in \citep{razin2025unintentional}.

The use of \(f\)-divergences for direct preference optimization was recently introduced by \citet{DBLP:conf/iclr/WangJYLC24}, who proposed the \(f\)-DPO loss. Our theoretical results in \S\ref{subsec: can} are a non-trivial generalization of theirs. Perhaps closest to our work is \citet{DBLP:conf/iclr/HuangZXL0KF25}, who also analyze an instantiation of \(f\)-DPO with formal guarantees and robustness to over-optimization. While their focus is analyzing one specific instance in depth, we take a broader perspective and provide theoretical results that hold for general \(f\).

\citet{li2025choicedivergenceneglectedkey} investigate the role of \(f\)-divergences within the RLVR framework, focusing on preserving solution diversity during online training. Other approaches, such as \(f\)-PO \citep{han2024fPO} and \(f\)-DGP \citep{go2023aligninglanguagemodelspreferences}, also use \(f\)-divergences for alignment, but do so by minimizing the divergence to a target model, rather than by replacing the KL term in the RLHF objective as in our work. 


There are several works that propose losses that are similar in flavor to ours, however they are introduced heuristically rather than arise from theory. Similar to \SquaredPO{}, SimPO \citep{meng2024simpo} can also be viewed as DPO with adaptive \(\beta\)-s; however, in their case, the \(\beta\)-s depend only on the responses' length and are static throughout training. In \(\beta\)-DPO \citep{wu2024betadpo} and \(\varepsilon\)-DPO \citep{lee2025klpenaltycontrolperturbation}, the \(\beta\)-s, just as in our loss, depend both on the current model and on the specific prompt and response, however, they both, unlike us, introduce an additional hyperparameter.


Among the works that deal with likelihood displacement, our theoretical analysis most closely resembles that of \citet{asadi2025c2dpoconstrainedcontrolleddirect}. They, as well as others, e.g. \citep{pal2024smaugfixingfailuremodes, wu2025selfplay}, add an ad-hoc penalty term to the DPO loss to fight likelihood displacement, while we propose a one-term loss that arises from theory, and importantly does not add a hyperparameter, unlike the mentioned approaches that have the coefficient of the penalty term (typically \(\lambda\)) as an additional hyperparameter.

\section{Conclusion}

We characterized the set of DPO-inducing functions, i.e. functions \(f\) that can be used in \(f\)-DPO, showing that convexity of \(f\) is not a necessity. We studied the effect of the choice of \(f\) on likelihood displacement, which is the phenomenon where during preference optimization both the rejected and the chosen responses decrease in probability, finding that \(1 \le \argmin_{t \in \R_+} f\rb{t}\) is a necessary condition to mitigate the decrease. We believe that the two properties we propose, namely displacement-resistant and DPO-inducing, may provide a general design principle for future methods in direct preference optimization.

With these two properties in mind, we analyze an example of a function that satisfies both desiderata, resulting in a novel loss we call \SquaredPO{}. We find that this novel loss is not only competitive with DPO on standard benchmarks, but also offers a substantially greater degree of robustness to over-optimization and a clear empirical mitigation of likelihood displacement. Alongside our empirical examination of \SquaredPO{} and DPO, we report a monotonicity phenomenon of DPO that is related to, but different from, likelihood displacement. We show that \SquaredPO{} manages to alleviate this behavior of DPO as well.

\paragraph{Limitations} First, our experimental evaluation is restricted to a single dataset and model; although we provide additional displacement experiments with another model and another dataset in Appendix~\ref{app: additional displacement}. Second, the theoretical results regarding likelihood displacement focused on optimal solutions, abstracting away from optimizer behavior and training dynamics.
Third, on the empirical side, we compared our method to DPO and \ChiPO{}, while other DPO-style algorithms do exist.
Another limitation is that all our experiments were conducted under LoRA finetuning. 
On the theoretical side, while our proposed method does mitigate displacement empirically, the displacement-resistance condition we introduce is formally proved to be necessary, though not established as sufficient. Identifying a sufficient condition is an important direction for future work. 
Finally, while our analysis applies to general choices of \(f\), our experiments instantiate only one such function. Exploring other functions \(f\) that satisfy our desiderata is a promising avenue for future research.


\bibliography{papers}
\bibliographystyle{iclr2026_conference}

\newpage
\appendix

\section{Comparison to Related Preference-Optimization Methods}\label{app: additional related work}

In this appendix, we discuss in greater depth the connection between our results and several related methods mentioned in the related work.

The \(\chi\)PO loss suggested by \citet{DBLP:conf/iclr/HuangZXL0KF25} is an instantiation of \(f\)-DPO with 
\(f\rb{t} = \frac{1}{2} \rb{t-1}^2 + t \log t\)\footnote{In general, the \(\chi^2\) divergence is an \(f\)-divergence with \(f\rb{t} = \frac{1}{2} \rb{t-1}^2 \), however this function is not DPO-inducing and hence \citet{DBLP:conf/iclr/HuangZXL0KF25} had to add the KL-like term \(t \log t\).}. This function attains a global minimum at \(W(1)\approx0.56714\), where \(W\) is the Lambert \(W\) function \citep{corless1996lambertw}. Recall that our theory in \S~\ref{subsec: should} suggests that whenever the \(f\) used in \(f\)-DPO attains a global minimum at \(c \in (0,1] \), then, under mild conditions, the probabilities of all in-sample responses are expected to decrease by at least \(c\) (Lemma~\ref{lemma: decrease of in-sample ys}). This suggests that if for two DPO-inducing functions \(f_1\) and \(f_2\) we have \(\argmin f_1 < \argmin f_2 \leq 1\), then \(f\)-DPO with \(f_1\) is more prone to likelihood displacement than \(f\)-DPO with \(f_2\). Therefore, when comparing \(\chi\)PO with vanilla DPO, and recalling that the \(f\) used by vanilla DPO, \(f(t)= t \log t\), attains a global minimum at \(e^{-1} = 0.36788\), we can expect \(\chi\)PO to be less prone to probability displacement than vanilla DPO (since \(e^{-1} < W(1)\)). This is in line with the findings of \citet{DBLP:conf/iclr/HuangZXL0KF25}, who analyze a setting in which DPO suffers from likelihood displacement while \(\chi\)PO does not (Remark B.1 therein). Regarding our method \SquaredPO{}, which is \(f\)-DPO with the function \(f(t)=\tfrac{1}{2} (\log t)^2\) that attains a minimum at \(c=1\), our theory suggests that it is less prone to likelihood displacement than both vanilla DPO and \(\chi\)PO. See Figure~\ref{fig:functions} for an illustration of the three generating functions \(f\) mentioned in this paragraph.

\begin{figure}
    \centering
    \includegraphics[width=0.75\linewidth]{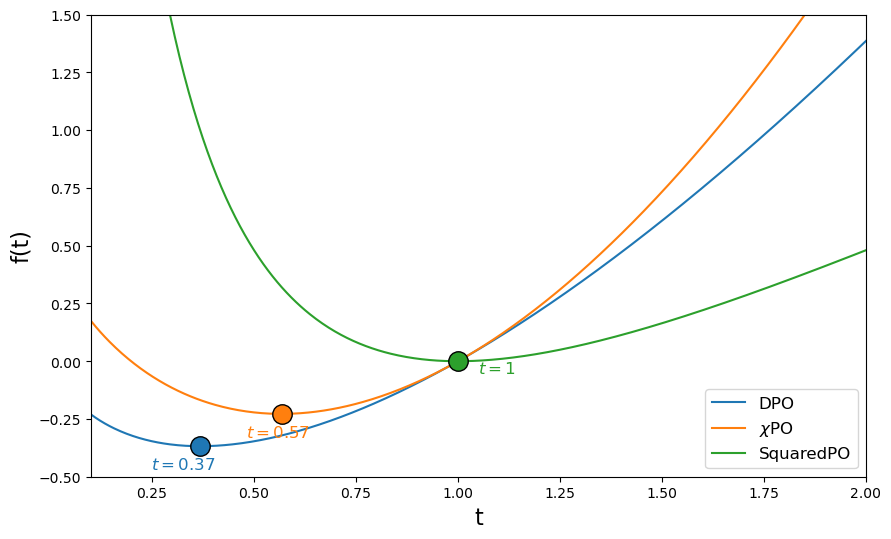}
    \caption{The $f$-divergence generators used by DPO ($t \log t$), $\chi$PO $\big(\tfrac{1}{2}(t-1)^2 + t \log t\big)$, and \SquaredPO{} $\big(\tfrac{1}{2} (\log t)^2\big)$. The location of the global minimum of each function determines its susceptibility to likelihood displacement, with \SquaredPO{} (minimum at $t=1$) being the most resistant to displacement.}
    \label{fig:functions}
\end{figure}

\section{Hyperparameters and Implementation Details}\label{app: hyperparameters and implementation details}

\paragraph{General training details}
Our implementation of the training pipeline is based on the DPO trainer from the TRL repository \citep{vonwerra2022trl}. For all losses, we use a learning rate of $5e{-7}$, \(\beta=0.01\) (standard choices for DPO in this setting, e.g. \citep{meng2024simpo}), a batch size of 64, a max sequence length of 2048, a linear learning rate scheduler, and the optimizer is AdamW \citep{loshchilov2018decoupled}. We use LoRA \citep{hu2022lora} with rank \(r=16\), scaling factor \(\alpha=32\), and dropout rate \(0.05\). Training was performed in \texttt{bf16} precision on 4$\times$NVIDIA A100-SXM4-40GB GPUs. While we focus on LoRA fine-tuning for computational reasons, we expect the same empirical trends to hold under full fine-tuning.

\paragraph{Clipping}
Recall that to calculate the \SquaredPO{} loss for a sample \((x, y_w, y_l)\), we need to calculate \(\beta_\theta\rb{y, x} \coloneqq \beta / \tfrac{\pi_\theta\rb{y \mid x}}{\piref\rb{y \mid x}}\) for any \(y \in S_x\), that is, for \(y_w\) and \(y_l\). In the TRL repository, and in general, one works with log-probabilities rather than with raw probabilities. We calculate \(\beta_\theta\rb{y, x}\) by 
\(
\beta \exp\rb{\min\cb{\log \piref\rb{y \mid x} - \log \pi_\theta\rb{y \mid x}, 50}}
\).
Apart from the clipping, this is equivalent to the original expression. The threshold for clipping, $50$, is safe enough to prevent numerical issues and does not substantially change the learning curves based on some preliminary experiments we  performed.
We found it necessary to clip from above to prevent \(\beta_\theta\rb{y, x}\) from diverging to infinity in floating-point representation, whereas clipping from below was not necessary because \(\beta_\theta\rb{y, x}\) approaching zero is less problematic.

\paragraph{Evaluation details} To calculate win rates for the validation split, and to calculate win rates and length-controlled win rates for AlpacaEval 2, we use the \href{https://github.com/tatsu-lab/alpaca_eval}{alpaca\_eval} repository \citep{alpaca_eval}. To calculate the MT-Bench results, we use the \href{https://github.com/lm-sys/FastChat/tree/main/fastchat/llm_judge#mt-bench}{FastChat} repository. In all settings, we generate responses to prompts by sampling with a temperature of \(0.7\).

\section{Additional Empirical Results}\label{app: additional empirical}

\subsection{MT-Bench Results by Category}

Figure~\ref{fig:llama tldr mtbench 1 epoch} presents a radar chart that breaks down the MT-Bench results from Table~\ref{table: Benchmark tldr llamainstruct} into MT-Bench’s eight categories. \SquaredPO{} shows slight gains over DPO in coding tasks, performs worse on STEM tasks, and is otherwise largely on par.

\begin{figure}
    \centering
    \includegraphics[width=0.5\linewidth]{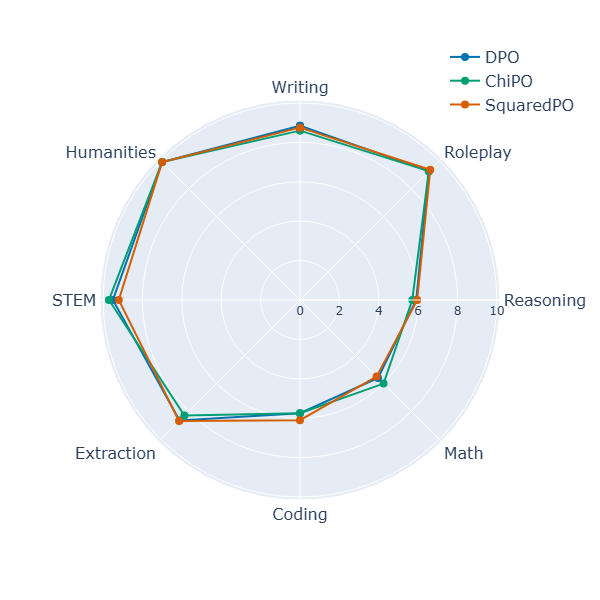}
    \caption{Category-level MT-Bench results for the models obtained from training Meta-Llama-3-8B-Instruct for one epoch on TL;DR using \SquaredPO{}, \ChiPO{} or DPO. For each of the eight MT-Bench categories, the reported value is the average score (across prompts in that category) assigned by an LLM-as-a-judge (gpt-4o). Scores are on a 0--10 scale, where higher is better.}
    \label{fig:llama tldr mtbench 1 epoch}
\end{figure}

\subsection{Additional Results on Likelihood Displacement}\label{app: additional displacement}

In  Figure~\ref{fig:winner_logratios_1to4} we present histograms of the distribution of the log-ratios \(\log(\pi_\theta(y_w \mid x)/\pi_{\text{ref}}(y_w \mid x))\) for all the 92.9K chosen responses \(y_w\) in the training split of TL;DR when using this dataset to perform preference optimization on Meta-Llama-3-8B-Instruct. The smaller the term \(\log(\pi_\theta(y_w \mid x)/\pi_{\text{ref}}(y_w \mid x))\) is, the more severe the likelihood displacement this winner witnessed. Notice how with DPO, already after one epoch a significant portion of the winners suffered from severe likelihood displacement, while \SquaredPO{} manages to control the extent to which probabilities get reduced. As training progresses, under DPO we see the distribution shifting to more negative values, while \SquaredPO{} remains relatively stable.

\begin{figure}
    \centering
    \begin{subfigure}[t]{0.48\linewidth}
        \centering
        \includegraphics[width=\linewidth]{figures/llama3instruct/tldr/winner_logratios_hist_1epoch.png}
        \caption{After 1 epoch}
        \label{fig:winner_logratios_1epoch}
    \end{subfigure}\hfill
    \begin{subfigure}[t]{0.48\linewidth}
        \centering
        \includegraphics[width=\linewidth]{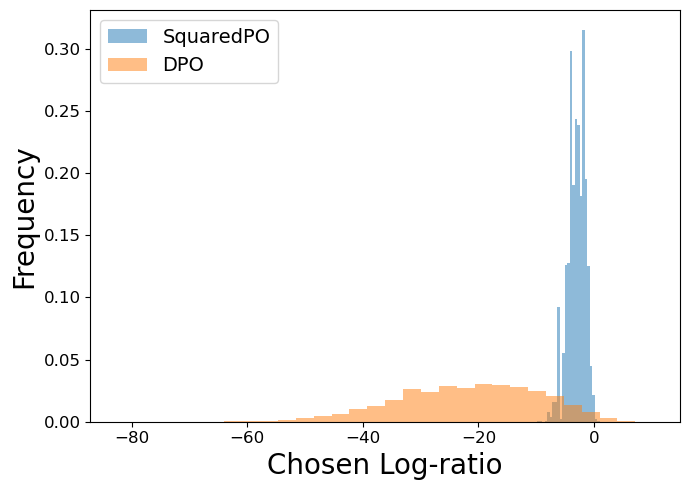}
        \caption{After 2 epochs}
        \label{fig:winner_logratios_2epochs}
    \end{subfigure}

    \vspace{0.4cm} 

    \begin{subfigure}[t]{0.48\linewidth}
        \centering
        \includegraphics[width=\linewidth]{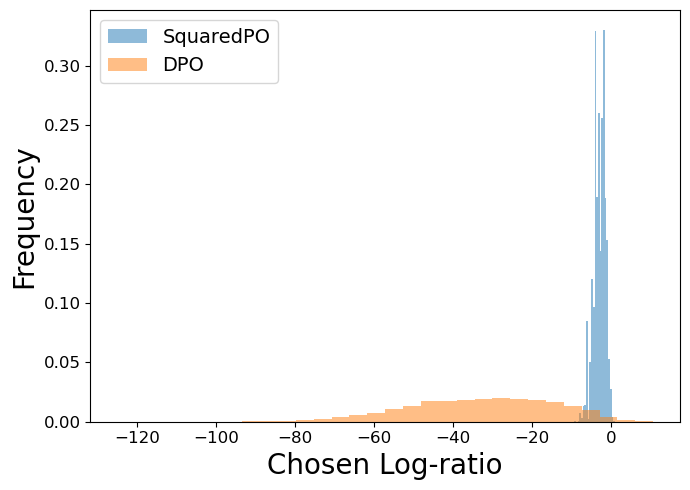}
        \caption{After 3 epochs}
        \label{fig:winner_logratios_3epochs}
    \end{subfigure}\hfill
    \begin{subfigure}[t]{0.48\linewidth}
        \centering
        \includegraphics[width=\linewidth]{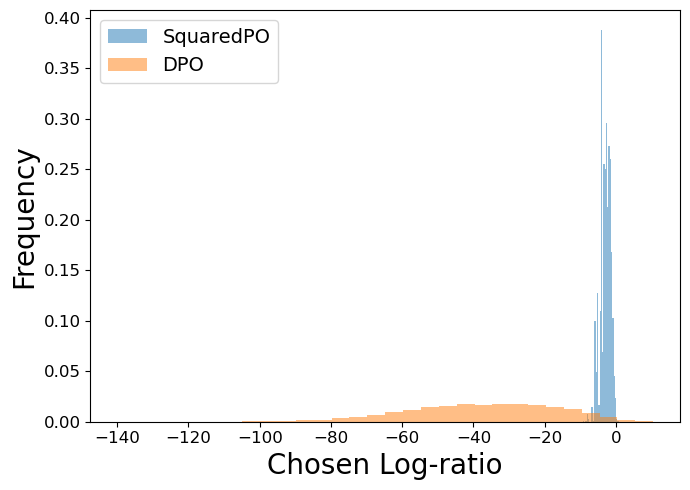}
        \caption{After 4 epochs}
        \label{fig:winner_logratios_4epochs}
    \end{subfigure}

    \caption{Histograms of chosen log-ratios \(\log(\pi_\theta(y_w \mid x)/\pi_{\text{ref}}(y_w \mid x))\) for all chosen responses in the training set \(\mathcal{D}\) at different stages of training. With \SquaredPO{}, likelihood displacement is consistently less severe compared to DPO, and the difference becomes more pronounced as training progresses.}
    \label{fig:winner_logratios_1to4}
\end{figure}

In Table~\ref{tab:monotonic-decrease} we provide additional results regarding the monotonicity phenomenon reported in \S\ref{subsec: experimental results}. For example, when using the TL;DR dataset to finetune Meta-Llama-3-8B-Instruct with DPO, 99.99\% of the winners whose probability decreased after the first epoch continued decreasing in the second, compared to only 10.91\% under \SquaredPO{}.

To diversify our experimental setting in the context of likelihood displacement across models and datasets, we additionally train \href{https://huggingface.co/Qwen/Qwen2-0.5B-Instruct}{Qwen2-0.5B-Instruct} \citep{qwen2} on the \href{https://huggingface.co/datasets/trl-lib/ultrafeedback_binarized/viewer}{UltraFeedback} dataset \citep{ultrafeedback} for four epochs. The empirical monotonicity analysis for this case is reported in Table~\ref{tab:monotonic-decrease}. In this setting, 35.57\% of winners who decreased under DPO in the first epoch continued to decrease monotonically until the fourth epoch, whereas under \SquaredPO{} this proportion is merely 1.64\%. Importantly, the trend that \SquaredPO{} manages to break the downward monotonicity of winner probabilities compared to DPO holds consistently across all experimental settings. A mechanistic explanation of this trend is provided in the last paragraph of \S\ref{subsec: experimental results}.

\begin{table}[t]
\centering
\caption{Among the winners whose probability decreased after the first epoch, 
we report the percentage that continued decreasing in later epochs. 
Results are shown for DPO and \SquaredPO{} in two settings: Llama + TL;DR and Qwen + UltraFeedback.}
\label{tab:monotonic-decrease}
\begin{tabular}{lcccc}
\toprule
& \multicolumn{2}{c}{\textbf{Llama + TL;DR}} & \multicolumn{2}{c}{\textbf{Qwen + UltraFeedback}} \\
\cmidrule(lr){2-3} \cmidrule(lr){4-5}
\textbf{Condition} & \textbf{DPO (\%)} & \textbf{\SquaredPO{} (\%)} & \textbf{DPO (\%)} & \textbf{\SquaredPO{} (\%)} \\
\midrule
Decrease from 1 $\to$ 2 & 99.99 & 10.91 & 93.72 & 25.56 \\
Monotone decrease 1 $\to$ 3 & 99.74 & 9.38 & 39.99 & 6.24 \\
Monotone decrease 1 $\to$ 4 & 99.63 & 4.21 & 35.57 & 1.64 \\
\bottomrule
\end{tabular}
\end{table}

To further illustrate the trend in Table~\ref{tab:monotonic-decrease}, we track the log-ratio trajectories of ten randomly selected winners from the TL;DR training set across the four training epochs of Meta-Llama-3-8B-Instruct under both DPO and \SquaredPO{}, using the same winners for each method. We visualize this in Figure~\ref{fig:per_winner_logratios_dpo_and_sq}. Notice that under DPO, all ten winners display a monotonic decrease in probability, while for \SquaredPO{} this is not the case. Moreover, notice the different scales in the \(y\)-axes; under \SquaredPO{}, the decrease is of smaller magnitude.

\begin{figure}
    \centering
    \begin{subfigure}[t]{0.48\linewidth}
        \centering
        \includegraphics[width=\linewidth]{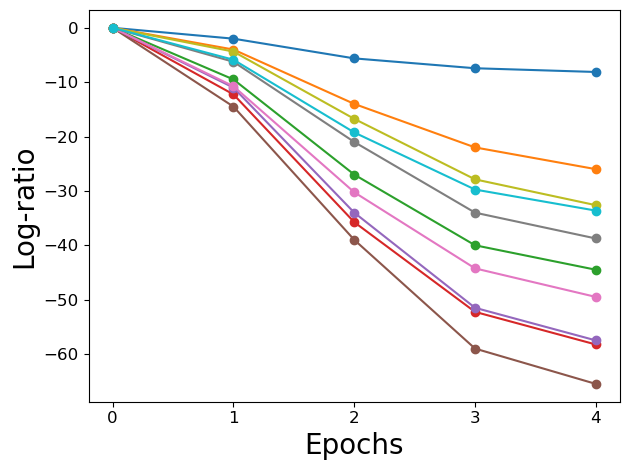}
        \caption{DPO}
        \label{fig:sq per winner}
    \end{subfigure}\hfill
    \begin{subfigure}[t]{0.48\linewidth}
        \centering
        \includegraphics[width=\linewidth]{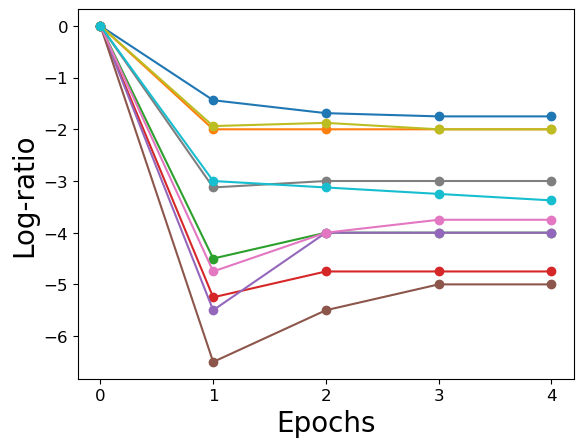}
        \caption{\SquaredPO{}}
        \label{fig:dpo per winner}
    \end{subfigure}

    \caption{The evolution of the log-ratios \(\log(\pi_\theta(y_w \mid x)/\pi_{\text{ref}}(y_w \mid x))\) during training for a random subset of \(10\) samples from the TL;DR training set. \SquaredPO{} mitigates the `monotonicity' phenomenon observed in DPO.}
    \label{fig:per_winner_logratios_dpo_and_sq}
\end{figure}

\section{Theoretical Results Omitted from the Paper}

\subsection{Theoretical Results Omitted from Subsection \ref{subsec: can}}

We start by defining the notion of an interior-inducing function, briefly mentioned in \S\ref{subsec: can}.

\begin{definition}\label{def: interior inducing}
    We say a function \(f:\R_+ \to \R\) is \emph{interior-inducing} if
    for any reward model \(r_\phi\), any reference policy \(\piref\), and any regularization coefficient \(\beta > 0\), every optimal solution \(\pi_\theta\) to the generalized RLHF problem \eqref{eq: RLHF objective f-divergence} assigns strictly positive probability to all continuations, i.e., \(\pi_\theta\rb{y \mid x} > 0\) for all \(x, y\). 
\end{definition}

The condition in Definition~\ref{def: interior inducing} is equivalent to requiring that, for each prompt \(x\), \(\pi_\theta\rb{\cdot \mid x}\) is in the relative interior \citep{beck2017first} of the simplex, which motivates our terminology. Next, we establish that this definition is equivalent to the definition of a DPO-inducing function.

\begin{lemma}\label{lemma: DPO-inducing iff interior-inducing}
    Let \(f:\R_+ \to \R\) be a continuous function in \(\R_+\), which is continuously differentiable in \(\R_{++}\). Then, \(f\) is DPO-inducing if and only if it is interior-inducing.
\end{lemma}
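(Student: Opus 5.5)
The plan is to analyze the optimality conditions of the per-prompt problem. Since \eqref{eq: RLHF objective f-divergence} decouples over prompts, fix \(x\) and write \(p_y=\pi_\theta(y\mid x)\), \(q_y=\piref(y\mid x)>0\), \(r_y=r_\phi(x,y)\). The problem is to maximize the continuous function \(\sum_y p_y r_y-\beta\sum_y q_y f(p_y/q_y)\) over the simplex, which is compact, so optimal solutions exist. For the direction ``interior-inducing \(\Rightarrow\) DPO-inducing'', let \(p\) be an optimal solution with all \(p_y>0\). Then \(p\) lies in the relative interior of the simplex, and \(f\) is \(C^1\) at each \(p_y/q_y\). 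The Lagrange (KKT) conditions with only the equality constraint \(\sum_y p_y=1\) therefore give \(r_y-\beta f'(p_y/q_y)=\lambda(x)\) for all \(y\), where the multiplier \(\lambda(x)\) does not depend on \(y\). This gives \(r_\phi(x,y)=\beta f'\!\rb{\pi_\theta(y\mid x)/\piref(y\mid x)}+\lambda(x)\). Substituting into \eqref{eq: BT}, \(\lambda(x)\) cancels in the difference \(r(x,y_w)-r(x,y_l)\), and we obtain exactly \eqref{eq: f-DPO loss}. This direction is routine.

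For the converse, ``DPO-inducing \(\Rightarrow\) interior-inducing'', I would argue by contraposition. Suppose there are \(r_\phi,\piref,\beta\) and an optimal \(p\) with \(p_{y_0}=0\) for some \(y_0\). The KKT conditions now include an inequality: for support points \(y\), \(r_y-\beta f'(p_y/q_y)=\lambda\), while \(y_0\) contributes only one-sided information. At \(y_0\) the expression \(f'(0)\) is either undefined or differs from the multiplier relation. The goal is to exhibit a preference pair for which substitution fails to produce \eqref{eq: f-DPO loss}. Concretely, take a pair \((y_w,y_l)=(y_0,y)\) with \(y\) in the support. Then \(f'(\pi_\theta(y_0\mid x)/\piref(y_0\mid x))=f'(0)\), which is either meaningless, since \(f\) is only differentiable on \(\R_{++}\), or generally unequal to \((r_{y_0}-\lambda)/\beta\). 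The latter holds because optimality at a boundary point only gives an inequality involving the one-sided behaviour of \(f\) near \(0\), not an equality. To make the failure definite, I would exploit the freedom to perturb the reward: lowering \(r_{y_0}\) keeps \(p\) optimal, since any feasible objective decreases by at least as much as that of \(p\), which does not change. This changes the BT-side reward difference while the \(f\)-DPO expression stays fixed, so the two losses cannot coincide for all such rewards.

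The main obstacle is this converse step, in particular making precise what ``yields the \(f\)-DPO loss'' means when \(p_{y_0}=0\), where \(f'(0)\) may not exist. I would handle it in two cases. If \(f'\) does not extend to \(0\), the \(f\)-DPO loss is undefined at that solution and the definition fails outright. Otherwise, the reward-lowering perturbation gives a one-parameter family of BT losses paired with a single \(f\)-DPO value, which yields a contradiction. One must check that the perturbation keeps the same optimal solution; this follows from the monotonicity argument above, because decreasing \(r_{y_0}\) only lowers the objective of policies placing mass on \(y_0\).
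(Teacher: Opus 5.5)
Your proof is correct and has the same skeleton as the paper's.

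\textbf{Forward direction.} For ``interior-inducing \(\Rightarrow\) DPO-inducing'' you do what the paper does. The paper invokes its technical lemma that, at a maximizer over the simplex, the partial derivatives agree on all positive coordinates. It proves this by a directional-derivative argument along \(\mathbf{e}_j-\mathbf{e}_i\). That is exactly your relative-interior Lagrange condition \(r_y-\beta f'(p_y/q_y)=\lambda(x)\). No constraint-qualification issue arises, since the constraints are linear.

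\textbf{Converse.} The difference is in the converse. The paper's argument is essentially a one-line assertion: once \(\pi_\theta(y_w\mid x)=0\), it is ``impossible'' to express \(r_\phi(x,y_w)-r_\phi(x,y_l)\) through the optimal probabilities. This implicitly relies on \(f'\) being undefined at \(0\). Your reward-lowering perturbation supplies an actual proof.
\begin{itemize}
\item Replacing \(r_{y_0}\) by \(r_{y_0}-\delta\) leaves the objective at \(p\) unchanged, because \(p_{y_0}=0\).
\item It lowers the objective of every feasible \(p'\) by \(\delta p'_{y_0}\ge 0\), so \(p\) stays optimal for the whole family.
\item Hence the reward difference moves with \(\delta\), while anything computed from \(p\) stays fixed. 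The required identity can therefore hold for at most one \(\delta\).
\end{itemize}
This shows the stronger fact that the reward difference is not a function of the optimal policy at all. The argument covers every way of assigning a value to the \(f\)-DPO term at \(0\), whether finite or \(\pm\infty\), and it does not even need the other response \(y\) to lie in the support.

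Both your argument and the paper's tacitly assume that the zero-probability response appears in some preference pair of \(\mcal{D}\). That is a looseness in the definitions rather than a gap specific to your proof.
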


\begin{proof}
    Assume first that \(f\) is not interior-inducing. Then, there exist $r_\phi, \piref$, and $\beta$ such that an optimal solution to \eqref{eq: RLHF objective f-divergence} satisfies \(\pi_\theta\rb{y_w \mid x} = 0\) for some \((x, y_w, y_l) \in \mcal{D}\). 
    Now, recall that for \(f\) to be DPO-inducing, substituting optimal solutions to \eqref{eq: RLHF objective f-divergence} into the BT model \eqref{eq: BT} should yield the \(f\)-DPO loss. However, to perform this substitution, one needs to express the rewards difference \(r_\phi\rb{x, y_w} - r_\phi\rb{x, y_l}\) in terms of the optimal \(\pi_\theta\rb{y_w \mid x}\), \(\pi_\theta\rb{y_l \mid x}\). Since we have \(\pi_\theta\rb{y_w \mid x} = 0\), this is impossible.
    
    Assume that \(f\) is interior-inducing. We need to show that \(f\) is DPO-inducing, i.e., that substituting any optimal solution of problem \eqref{eq: RLHF objective f-divergence} into the BT model \eqref{eq: BT} yields the \(f\)-DPO loss \eqref{eq: f-DPO loss}. Let \(\pi_\theta\) be an optimal solution to \eqref{eq: RLHF objective f-divergence}. Since \(f\) is interior-inducing, by definition \(\pi_\theta\rb{y_w \mid x} > 0\) and \(\pi_\theta\rb{y_l \mid x} > 0\) for any \((x, y_w, y_l) \in \mcal{D}\). Now, we define
    \begin{equation*}
        g (\pi_\theta):= \E[y \sim \pi_\theta\rb{\cdot \mid x}]{r_\phi \rb{x, y}} - \beta D_f\bb{\pi_\theta\rb{\cdot \mid x} \parallel \pi_{\text{ref}}\rb{\cdot \mid x}} \; ,
    \end{equation*}
    which is the objective function of problem \eqref{eq: RLHF objective f-divergence}. Using a technical result (see Lemma \ref{lemma: derivative equal for positive entries} below) implies
    \begin{equation*}
        \pd{g (\pi_\theta)}{\pi_\theta\rb{y_w \mid x}} = \pd{g (\pi_\theta)}{\pi_\theta\rb{y_l \mid x}} \; ,
    \end{equation*}
    which reads as follows
    \begin{equation*}
        r_\phi\rb{x, y_w} - \beta f'\rb{\frac{\pi_\theta\rb{y_w \mid x}}{\piref\rb{y_w \mid x}}} = r_\phi\rb{x, y_l} - \beta f'\rb{\frac{\pi_\theta\rb{y_l \mid x}}{\piref\rb{y_l \mid x}}} \; .
    \end{equation*}
    Rearranging the equation gives
    \begin{equation*}
        r_\phi\rb{x, y_w} - r_\phi\rb{x, y_l} = \beta f'\rb{\frac{\pi_\theta\rb{y_w \mid x}}{\piref\rb{y_w \mid x}}} - \beta f'\rb{\frac{\pi_\theta\rb{y_l \mid x}}{\piref\rb{y_l \mid x}}} \; , 
    \end{equation*}
    and substituting this difference into the BT model \eqref{eq: BT} yields the \(f\)-DPO loss \eqref{eq: f-DPO loss}, as required.
\end{proof}

We next state Theorem~\ref{theorem: interior inducing}, which provides a full characterization of DPO-inducing functions.

\begin{theorem}\label{theorem: interior inducing}
    Let \(f:\R_+ \to \R\) be a continuous function in \(\R_+\), which is  continuously differentiable in \(\R_{++}\). 
    \begin{enumerate}
        \item[i] If \(f\rb{0} = \infty\), then \(f\) is DPO-inducing.
        \item[ii] Suppose \(f\rb{0} < \infty\). Then, \(f\) is DPO-inducing if and only if the ratio \(\frac{f\rb{t} - f\rb{0}}{t}\) is not bounded from below in any right neighborhood of \(0\) (i.e., \(\forall a > 0, \forall M \in \R, \exists t \in \rb{0, a}\) such that \(\frac{f\rb{t} - f\rb{0}}{t} < M\)).
    \end{enumerate}
\end{theorem}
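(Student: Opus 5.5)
By Lemma~\ref{lemma: DPO-inducing iff interior-inducing}, it suffices to characterize when \(f\) is interior-inducing. The plan is to fix a prompt \(x\) and study the per-prompt problem
\[
\max_{\mathbf{p}\in\Delta_n} \sum_i p_i r_i - \beta \sum_i q_i f(p_i/q_i),
\]
with \(q=\piref(\cdot\mid x)\) and \(r_i = r_\phi(x,y_i)\). The key step is a local perturbation argument around a candidate optimum \(\mathbf{p}\) having some coordinate \(p_j=0\). Since \(\sum_i p_i = 1\), there is an index \(k\) with \(p_k>0\). Consider moving mass \(\varepsilon>0\) from \(k\) to \(j\). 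The change in the objective is
\[
\varepsilon(r_j-r_k) - \beta q_j\bigl(f(\varepsilon/q_j)-f(0)\bigr) - \beta q_k\bigl(f((p_k-\varepsilon)/q_k)-f(p_k/q_k)\bigr).
\]
Because \(f\) is \(C^1\) at \(p_k/q_k>0\), the last term equals \(\beta\varepsilon f'(p_k/q_k)+o(\varepsilon)\). The whole change is therefore
\[
\varepsilon\Bigl[r_j-r_k+\beta f'(p_k/q_k) - \beta\,\tfrac{f(t)-f(0)}{t}\Big|_{t=\varepsilon/q_j}\Bigr]+o(\varepsilon).
\]

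\textbf{Case (i), \(f(0)=\infty\).} Any \(\mathbf{p}\) with a zero coordinate on the support of \(q\) has objective \(-\infty\). The uniform-ratio choice \(\mathbf{p}=q\) gives a finite objective, so every optimum is interior. Implicit here is the convention that \(\piref>0\) everywhere, so all terms are defined.

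\textbf{Case (ii), sufficiency.} Suppose \((f(t)-f(0))/t\) is unbounded below near \(0\). Then we can pick \(t=\varepsilon/q_j\) arbitrarily small with the difference quotient below any threshold \(M\). The bracket above then becomes positive, and the \(o(\varepsilon)\) term is dominated along that sequence because the bracket grows unboundedly. Hence a small perturbation strictly improves the objective, contradicting optimality, so no optimum has a zero coordinate. One care point: the \(o(\varepsilon)\) is uniform enough since only the \(k\) term contributes it, and it is \(o(\varepsilon)\) independently of the choice of \(t\)-sequence.

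\textbf{Case (ii), necessity.} Suppose instead that \((f(t)-f(0))/t\ge M\) on some \((0,a)\). We must construct \(r,q,\beta\) for which some optimum has a zero coordinate. Take \(n=2\), \(\beta=1\), \(q=(1/2,1/2)\), and rewards \(r_1=R\) huge and \(r_2=0\). We then show the maximizer over \(p_2\in[0,1]\) of
\[
h(p_2)=R(1-p_2)-\tfrac12 f(2(1-p_2)) - \tfrac12 f(2p_2)
\]
is at \(p_2=0\). For small \(p_2\), the lower bound on the difference quotient gives \(f(2p_2)\ge f(0)+2Mp_2\). Meanwhile \(f(2(1-p_2))\) is controlled by continuity and boundedness of \(f\) on \([0,2]\). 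Thus for \(p_2\in(0,a/2)\) we get \(h(p_2)-h(0)\le -Rp_2 - Mp_2 + \tfrac12\bigl(f(2)-f(2-2p_2)\bigr)\). The last term is \(O(p_2)\) because \(f\) is \(C^1\) near \(2\), so this is negative once \(R\) is large. For \(p_2\ge a/2\), the \(-Rp_2\) term dominates the bounded \(f\) terms. Therefore \(p_2=0\) is the unique maximizer, and \(f\) is not interior-inducing.

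I expect the main obstacle to be the bookkeeping in the necessity construction, where we need uniform bounds both near \(0\) and away from it. A secondary subtlety is the handling of \(f(0)\) infinite versus finite within the continuity assumptions.
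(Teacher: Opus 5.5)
Your proof is correct. For item (i) and the sufficiency half of (ii) it is the paper's argument. The paper also shifts mass $\varepsilon$ from a positive coordinate onto the zero one, and bounds the donor term below by $\varepsilon(f'(p_j/q_j)-1)$. It then takes $\varepsilon=tq_i$ with the difference quotient below a suitable threshold.

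Necessity is where you take a different route. The paper works on $\Delta^n$ with uniform $q$, $r_1$ large and $r_i=0$ otherwise, in three steps:
\begin{enumerate}
\item It extends the lower bound on $(f(t)-f(0))/t$ from $(0,a)$ to all of $(0,n]$ by the extreme value theorem.
\item It swaps $p_1$ with some $p_j>1/n$ to reduce to $p_1\ge 1/n$.
\item It shows that moving the whole mass of any single $p_i>0$ onto coordinate $1$ strictly increases $g$. Here the mean value theorem on $[1,n]$ bounds $f(np_1+np_i)-f(np_1)$.
\end{enumerate}
You instead compare every feasible point directly with $e_1$ in a two-point instance. You split into the regime $p_2<a/2$, handled by the ratio bound plus the mean value theorem near $2$, and the regime $p_2\ge a/2$, handled by boundedness of $f$ on $[0,2]$ against $-Rp_2$. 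This removes the swap step and the extension of the ratio bound. It also gives uniqueness of the maximizer without appealing to its existence. In the first regime you should shrink $a$ to at most $1$, so that $2-2p_2$ stays in $[1,2]$ where $f'$ is bounded; otherwise your ``$O(p_2)$'' bound is not uniform.

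What the two-point instance does not give you is an arbitrary response-space size. Interior-inducing quantifies over $r_\phi$, $\piref$ and $\beta$, but the set of responses is given, and for a language model it is far larger than two. The paper's counterexample lives in $\Delta^n$ for every $n\ge 2$.

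Your argument adapts with one change: take $\beta=1$, $q$ uniform on $n$ points, $r_1=R$ and $r_\ell=0$ for $\ell\ge 2$, and split on $s=1-p_1$ rather than on a single coordinate. Set $a'=\min\{a,n/2\}$.
\begin{itemize}
\item If $0<s<a'/n$, then $np_\ell\le ns<a$ for every $\ell\ge 2$, so the ratio bound applies coordinatewise. This gives $g(e_1)-g(p)\ge s(R+M-D)$ with $D=\max_{[n/2,n]}|f'|$.
\item If $s\ge a'/n$, then $g(e_1)-g(p)\ge Rs-2B\ge Ra'/n-2B$ with $B=\max_{[0,n]}|f|$.
\end{itemize}
Both bounds are positive once $R$ is large, so $e_1$ is again the unique maximizer.
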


The proof for Theorem~\ref{theorem: interior inducing} can be found in Appendix~\ref{app: proof for theorem interior inducing}. Recall that in Corollary~\ref{corollary: interior-inducing iff L = - infty} we add an assumption under which the condition of the characterization becomes easier to check.

\subsection{The Optimal Solutions of \texorpdfstring{Problem~\eqref{eq: partial sum RLHF LM f-divergence}}{the Partial-Sum RLHF Objective}}\label{app: optimal solutions to the wierd problem}

In this appendix, we provide and briefly discuss the optimal solution(s) to Problem~\ref{eq: partial sum RLHF LM f-divergence} under the additional assumption that \(f\) is convex, differentiable, and \(f'\) is invertible. We do not prove it here, but rather rephrase it in simpler notations and prove it in Lemma~\ref{lemma : solution to wierd problem}. 

First, notice that Problem~\ref{eq: partial sum RLHF LM f-divergence} is separable over different \(x\)-s and hence we can provide the optimal solution \(\pi_\theta\) separately for each \(x \in \mcal{D}\). So let \(x\in\mcal{D}\). Denote \(\hat{r}_x \coloneqq \max_{y \notin S_x} r_\phi(x, y)\). If the condition
\begin{equation}\label{eq: cond for optimal set}
    \sum_{y \in S_x} \piref\rb{y \mid x} f'^{-1}\left(\frac{r_\phi\rb{x, y}-\hat{r}_x}{\beta}\right) < 1
\end{equation}
holds, then the set of optimal solutions \(\pi_\theta\rb{\cdot \mid x}\) is the set of \(\pi_\theta\rb{\cdot \mid x}\) that satisfy
\begin{align}
\begin{cases}
    \pi_\theta\rb{y \mid x} = \piref\rb{y \mid x} f'^{-1}\left(\frac{r_\phi\rb{x, y}-\hat{r}_x}{\beta}\right), & \forall y \in S_x, \\[5pt]
    \pi_\theta\rb{y \mid x} = 0, & \forall y \notin S_x, \, r_\phi\rb{x, y} < \hat{r}_x \\[5pt]
    \sum\limits_{\substack{y \notin S_x \\ r_\phi\rb{x, y}=\hat{r}_x}} \pi_\theta\rb{y \mid x} = 1 - \sum\limits_{y \in S_x} \piref\rb{y \mid x} f'^{-1}\left(\frac{r_\phi\rb{x, y}-\hat{r}_x}{\beta}\right). &
\end{cases}
\end{align}
Otherwise, the unique optimal solution is:
\begin{align}
\begin{cases}
    \pi_\theta\rb{y \mid x} = \piref\rb{y \mid x} f'^{-1}\left(\frac{r_\phi\rb{x, y}+\mu}{\beta}\right), & \forall y \in S_x, \\[5pt]
    \pi_\theta\rb{y \mid x} = 0, & \forall y \notin S_x.
\end{cases}
\end{align}
where $\mu$ is a solution to $\sum_{y \in S_x} \piref\rb{y \mid x} f'^{-1}\left(\frac{r_\phi\rb{x, y}+\mu}{\beta}\right) = 1$.

This wraps up the characterization of the optimal set of the weird optimization problem \eqref{eq: partial sum RLHF LM f-divergence}. Notice that in both cases, the optimal solutions can assign zero probability to out-of-sample responses.  This demonstrates that Problem~\eqref{eq: partial sum RLHF LM f-divergence} does not guardrail out-of-sample responses to stay close in probability to \(\piref\). We rephrase and prove this characterization with more convenient notations in Lemma \ref{lemma : solution to wierd problem}. We note that this result appears in \citet{asadi2025c2dpoconstrainedcontrolleddirect} for the special case \(f_{\text{KL}}\rb{t}=t \log t\).

\section{Proofs}

\subsection{Proof of Theorem \ref{theorem: interior inducing}}\label{app: proof for theorem interior inducing}

\begin{proof}
    For simplicity of the proof, we define
    \begin{equation} \label{G}
        \max_{\mathbf{p} \in \Delta^n} g\rb{\mathbf{p}} := \cb{\mathbf{r}^T \mathbf{p} - \beta D_f\rb{\mathbf{p} \parallel \mathbf{q}}}.
    \end{equation}
    We first prove the first item. Assume that \(f\rb{0} = \infty\), and we will show that any optimal solution $\mathbf{p}$ must be in \(\Delta^n_{++} = \cb{\mathbf{p}\in\Delta^n \mid \mathbf{p} > 0}\). Indeed, any vector $\mathbf{p}$, for which $p_{i} = 0$ for some $i \in [n]$, we have that $g(\mathbf{p}) = - \infty$. This obviously implies that such vectors $\mathbf{p}$ cannot be maximizers of $g$. Therefore, we get that any optimal solution lies in \(\Delta^n_{++}\).

    We now prove the second item. Assume that \(f\rb{0} < \infty\).

\underline{\(\Longrightarrow\):} We assume on the way of contradiction that the ratio \(t^{-1}[f\rb{t} - f\rb{0}]\) is bounded from below. We will prove that \(f\) is not interior-inducing by constructing an instance of the problem \eqref{G} with specific $\mathbf{r}$ and $\mathbf{q}$ such that \(\mathbf{p}^* = \mathbf{e_1}\) is an optimal solution. 

To this end, since the ratio \(t^{-1}[f\rb{t} - f\rb{0}]\) is bounded from below, there exist \(M \in \R\) and \(a > 0\) such that for all \(t \in \rb{0, a}\) we have \(t^{-1}[f\rb{t} - f\rb{0}] \ge M\). Since \(t^{-1}[f\rb{t} - f\rb{0}]\) is continuous in \(\bb{a, n}\), by the Extreme Value Theorem there exists \(K_1\) such that \(t^{-1}[f\rb{t} - f\rb{0}] \ge K_1\) for all $t \in \bb{a, n}$. Denoting \(m = \min\cb{K_1, M}\), we get \(t^{-1}[f\rb{t} - f\rb{0}] \ge m\) for all \(t \in \left(0,n\right]\). Moreover, let \(D = \max_{t \in \bb{1,2n}} f'\rb{t}\), which is finite by the assumption that \(f\) is continuously differentiable in \(\R_{++}\) and by the Extreme Value Theorem.

Now, we are ready to define $\mathbf{r}$ and $\mathbf{q}$. We consider problem \eqref{G} with $q_i = 1/n$ for all $i \in \bb{n}$, \(r_1 = \max\cb{\beta D - \beta m + 1, 1}\), and $r_i = 0$ for all $i \ge 2\). We will show that \(\mathbf{p}^* = \mathbf{e_1}\) is an optimal solution. To this end, since the problem obviously must have a maximizer, it is enough to show that no \(\mathbf{p}\) with \(p_i < 1\) can be an optimal solution.


We take \(\mathbf{p}\in \Delta_n\) with \(p_1 < 1\) such that \(p_i > 0\) for some \(i \in \bb{n}\).
Let \(\mathbf{p}' = \mathbf{p} + p_i\rb{\mathbf{e_1} - \mathbf{e_i}}\), that is, the probability vector obtained by moving all the mass from \(i\) to \(1\).

Then, from the definition of $\mathbf{p}'$, we get
\begin{align*}
    g\rb{\mathbf{p'}} - g\rb{\mathbf{p}}
    & =
    r_1\rb{p_1 + p_i} - \beta \sum_{\ell=1}^n \frac{1}{n}f\rb{n p'_\ell} - r_1 p_1 + \beta \sum_{\ell=1}^n \frac{1}{n}f\rb{n p_\ell}
    \\ & =
    r_1 p _i - \frac{\beta}{n} f\rb{n\rb{p_1 + p_i}} - \frac{\beta}{n} f\rb{0} + \frac{\beta}{n} f\rb{np_1} + \frac{\beta}{n} f\rb{np_i}
    \\ & = 
     r_1 p _i + \frac{\beta}{n} \rb{f\rb{np_i} - f\rb{0}} - \frac{\beta}{n}\rb{f\rb{n p_1 + n p_i} - f\rb{np_1}} \; .
\end{align*}

Dividing by \(p_i > 0 \) we get

\begin{align}
    \frac{g\rb{\mathbf{p'}} - g\rb{\mathbf{p}}}{p_i}
    & =
    r_1 + \beta \frac{f\rb{n p_i} - f\rb{0}}{n p_i} - \beta \frac{f\rb{n p_1 + n p_i} - f\rb{np_1}}{n p_i}
    \nonumber \\ & \ge
    r_1 + \beta m - \beta \frac{f\rb{n p_1 + n p_i} - f\rb{np_1}}{n p_i} \; , \label{eq: 1}
\end{align}
where the inequality is because \(n p_i \in \left(0,n\right]\).

Now, to bound the last term, first we notice that if \(p_1 < 1/n\), then \(\mathbf{p}\) is not optimal. Indeed, in this case there is necessarily \(j \neq 1\) such that \(p_j > 1/n\). Therefore, by considering the value of $g$ on the vector where \(p_j\) and \(p_1\) switches, we get higher value than $g(\mathbf{p})$ since \(r_1 \ge 1 > 0 = r_j\).

Therefore, we can assume \(p_1 \ge 1/n\). Moreover, by Lagrange's mean value theorem, there exists a \(c \in \rb{n p_1, n p_1 + n p_i}\) such that 
\begin{equation*}
    \frac{f\rb{n p_1 + n p_i} - f\rb{np_1}}{n p_i} = f'\rb{c} \leq \max_{t \in \bb{1,2n}} f'\rb{t} = D
\end{equation*}
where the inequality follows from the fact that $c \in \bb{1,2n}$. Indeed, since \(p_1 \ge 1/n\), we have \(1 \le n p_1 \). Also, \(n p_1 + n p_i \le n\) since \(p_1 + p_i \le 1\). Therefore, \(c \in \bb{1, n}\).

Using this in \eqref{eq: 1}, we get

\begin{equation*}
    \frac{g\rb{\mathbf{p'}} - g\rb{\mathbf{p}}}{p_i} \ge r_1 + \beta m - \beta D \ge \beta D - \beta m + 1 + \beta m - \beta D = 1> 0 \; ,
\end{equation*}

where the second inequality follows from the fact that \(r_1 \ge \beta D - \beta m + 1\). Therefore,
\begin{equation}
   g\rb{\mathbf{p'}} - g\rb{\mathbf{p}} > 0 \; ,
\end{equation}
which shows that \(\mathbf{p}\) is not optimal, as required.

\underline{\(\Longleftarrow\):} Assume that \(f\rb{0} < \infty\) and that the ratio \(t^{-1}[f\rb{t} - f\rb{0}]\) is not bounded from below, and we will prove that \(f\) is interior-inducing.

Let $\mathbf{r} \in \R^n, \mathbf{q}\in\Delta^n_{++}$, and $\beta > 0$. We need to show that any optimal solution to 
\begin{equation*}
    \max_{\mathbf{p} \in \Delta^n} g\rb{\mathbf{p}} := \cb{\mathbf{r}^T \mathbf{p} - \beta D_f\rb{\mathbf{p} \parallel \mathbf{q}}}
\end{equation*}
satisfies \(\mathbf{p} \in \Delta_{++}^{n} \). We assume in contradiction that \(p_i = 0\) for some \(i \in \bb{n}\) and we will prove that \(\mathbf{p}\) is not an optimal solution.

Let \(j \in \bb{n}\) be such that \(p_j > 0\), and define \(\mathbf{p^\varepsilon} = \mathbf{p} + \varepsilon \rb{\mathbf{e_i} - \mathbf{e_j}}\). Notice that for every \(\varepsilon < p_j\), we indeed have \(\mathbf{p^\varepsilon} \in \Delta^n\). Then, for all \(\varepsilon < p_j\), we have

\begin{align}
   g\rb{\mathbf{p^\varepsilon}} - g\rb{\mathbf{p}}
    & =
    \mathbf{r}^T \rb{\mathbf{p^\varepsilon} - \mathbf{p}} - \beta \sum_{\ell=1}^n q_\ell f\rb{\frac{p^\varepsilon_\ell}{q_\ell}} + \beta \sum_{\ell=1}^n q_\ell f\rb{\frac{p_\ell}{q_\ell}}
    \nonumber \\ & = 
    \varepsilon \mathbf{r}^T \rb{\mathbf{e_i} - \mathbf{e_j}} - \beta\rb{q_i f\rb{\frac{p^\varepsilon_i}{q_i}} + q_j f\rb{\frac{p^\varepsilon_j}{q_j}}} + \beta\rb{q_i f\rb{\frac{p_i}{q_i}} + q_j f\rb{\frac{p_j}{q_j}}}
    \nonumber \\ & = 
    \varepsilon \rb{r_i - r_j} - \beta q_i \rb{f\rb{\frac{\varepsilon}{q_i}} - f\rb{0}}
    + \beta q_j \rb{ f\rb{ \frac{p_j}{q_j} } - f\rb{ \frac{p_j - \varepsilon}{q_j} } } \; .\label{eq: 2}
\end{align}

We now wish to find \(\varepsilon\) such that \(g\rb{\mathbf{p^\varepsilon}} - g\rb{\mathbf{p}} > 0\), which will prove that $\mathbf{p}$ is not an optimal solution. Note that

\begin{equation*}
    \lim_{\varepsilon \to 0^+} \frac{q_j \rb{ f\rb{ \frac{p_j}{q_j} } - f\rb{ \frac{p_j - \varepsilon}{q_j} } }}{\varepsilon} = \lim_{\varepsilon \to 0^+} \frac{ f\rb{ \frac{p_j}{q_j} - \frac{\varepsilon}{q_j} } - f\rb{ \frac{p_j}{q_j} } }{ - \frac{\varepsilon}{q_j}}= f'\rb{\frac{p_j}{q_j}} \in \R \; ,
\end{equation*}
since \(f\) is differentiable at \(\R_{++}\). Therefore, by the definition of the limit, there exists $\bar{\varepsilon} > 0$ such that for all $\varepsilon \in \rb{0, \bar{\varepsilon}}$, 
\begin{equation*}
    \left |\frac{q_j \rb{ f\rb{ \frac{p_j}{q_j} } - f\rb{ \frac{p_j - \varepsilon}{q_j} } }}{\varepsilon} -  f'\rb{\frac{p_j}{q_j}} \right| < 1 \; ,
\end{equation*}
which means that
\begin{equation} \label{eq: using limit def}
    \frac{q_j \rb{ f\rb{ \frac{p_j}{q_j} } - f\rb{ \frac{p_j - \varepsilon}{q_j} } }}{\varepsilon} >  f'\rb{\frac{p_j}{q_j}} - 1 \; .
\end{equation}
Considering the ratio condition with \(M = f'\rb{\frac{p_j}{q_j}} + \oov{\beta}\rb{r_j - r_i} - 2\) and \(a = \min\cb{\bar{\varepsilon} , p_j} / q_{i}\), we get that there exists \(t \in \rb{0, a}\) such that \(\frac{f\rb{t} - f\rb{0}}{t} < M\). Defining \(\varepsilon_0 = t q_i\), we then get \(\varepsilon_0 \in \rb{0, \bar{\varepsilon} }\) and \(\varepsilon_0 \in \rb{0, p_j}\). Thus, 
\begin{equation}\label{eq: using M def}
    \frac{f\rb{\frac{\varepsilon_0}{q_i}} - f\rb{0}}{\frac{\varepsilon_0}{q_i}} = \frac{f\rb{t} - f\rb{0}}{t} < M \; .
\end{equation}

Therefore, by combining from \eqref{eq: 2}, \eqref{eq: using limit def}, and \eqref{eq: using M def} we get

\begin{equation*}
    g\rb{\mathbf{p}^{\varepsilon_0}} - g\rb{\mathbf{p}} > \varepsilon_0 \rb{r_i - r_j} - \beta M \varepsilon_0 + \beta \varepsilon_0 \rb{ f'\rb{\frac{p_j}{q_j}} - 1} = \beta \varepsilon_0 > 0 \; ,
\end{equation*}
where the last equality follows from the definition of \(M\). This shows that \(g\rb{\mathbf{p}^{\varepsilon_0}} > g\rb{\mathbf{p}}\), which contradicts the optimality of \(\mathbf{p}\).
\end{proof}

\subsection{Proof of Corollary \ref{corollary: interior-inducing iff L = - infty}}\label{app: proof for corollary interior-inducing iff L = - infty}

\begin{proof}
    Since \(\lim_{t \to 0^+} f'\rb{t}\) exists and $f'$ is continuous, there is $L$ such that
    \begin{equation}
        \lim_{t \to 0^+} \frac{f\rb{t} - f\rb{0}}{t} = f'\rb{0} = \lim_{t \to 0^+} f'\rb{t} = L.
    \end{equation}

    Therefore, \(L = -\infty\) if and only if the ratio \(t^{-1}[f\rb{t} - f\rb{0}]\) is not bounded from below. This concludes the result as an immediate consequence of Theorem \ref{theorem: interior inducing}. 
\end{proof}

\subsection{Proof of Lemma \ref{lemma: two problems lead to the same loss}}\label{app: proof of lemma two problems lead to the same loss}

\begin{proof}
    Since \(f\) is DPO-inducing, it satisfies the conditions in Theorem \ref{theorem: interior inducing}. Therefore, a straight-forward generalization of our proof of Theorem \ref{theorem: interior inducing} can show that any optimal solution \(\pi_\theta\) to Problem~\eqref{eq: partial sum RLHF LM f-divergence} satisfies \(\pi_\theta\rb{y \mid x} > 0\) for any \(y \in S_x\). Therefore, for any \((x, y_w, y_l) \in \mcal{D}\) we have \(\pi_\theta\rb{y_w \mid x} > 0\) and \(\pi_\theta\rb{y_l \mid x} > 0\). This implies, by Technical Lemma \ref{lemma: derivative equal for positive entries}, the following equality of derivatives:
    \begin{equation}
        \pd{g (\pi_\theta)}{\pi_\theta\rb{y_w \mid x}} = \pd{g (\pi_\theta)}{\pi_\theta\rb{y_l \mid x}} \; ,
    \end{equation}
    where we denote by \(g\rb{\pi_\theta}\) the objective function of \eqref{eq: partial sum RLHF LM f-divergence}. By calculation, this gives us
    \begin{equation}
        r_\phi\rb{x, y_w} - r_\phi\rb{x, y_l} = \beta f'\rb{\frac{\pi_\theta\rb{y_w \mid x}}{\piref\rb{y_w \mid x}}} - \beta f'\rb{\frac{\pi_\theta\rb{y_l \mid x}}{\piref\rb{y_l \mid x}}} \; , 
    \end{equation}
    which when substituted  into the BT model \eqref{eq: BT} yields the \(f\)-DPO loss \eqref{eq: f-DPO loss}, as required.
\end{proof}

\subsection{Proof of Lemma \ref{lemma: decrease of in-sample ys}}\label{app: proof of lemma decrease of in-sample ys}

\begin{proof}
    Suppose in contradiction that there exists an optimal solution \(\pi_\theta\) to Problem~\eqref{eq: partial sum RLHF LM f-divergence} such that for some \(x\) and for some \(y \in S_x\), \(\pi_\theta\rb{y \mid x} > c \cdot \piref\rb{y \mid x}\). 
    Denote by \(\hat{y} \in \argmax_{y' \notin S_x} r_\phi\rb{x, y'}\) an out-of-sample response for \(x\) with a maximal reward. We define
    \begin{align*}
        \pi'_\theta\rb{z \mid x} = 
        \begin{cases}
            c \cdot \piref\rb{y \mid x}, & z = y, \\[5pt]
            \pi_\theta\rb{\hat{y} \mid x} + \pi_\theta\rb{y \mid x} - \pi'_\theta\rb{y \mid x}, & z = \hat{y} \\[5pt]
            \pi_\theta\rb{z \mid x}, & z \notin \{ y , \hat{y} \}.
        \end{cases}
    \end{align*}    
    Since \(c \in (0,1]\), we immediately obtain that \(\pi'_\theta\rb{\cdot \mid x}\) is indeed a valid probability distribution. From~\eqref{eq: partial sum RLHF LM f-divergence} and using the definition of $\pi'_\theta\rb{z \mid x}$, we have
    \begin{align*}
        g\rb{\pi'_\theta} - g\rb{\pi_\theta} & = \pi'_\theta\rb{\hat{y} \mid x} r_\phi\rb{x, \hat{y}} + \pi'_\theta\rb{y \mid x} r_\phi\rb{x, y} -\beta \piref\rb{{y} \mid x} f\rb{\frac{\pi'_\theta\rb{y \mid x}}{\pi_{\text{ref}}\rb{y \mid x}}} \\ 
        & - \rb{\pi_\theta\rb{\hat{y} \mid x} r_\phi\rb{x, \hat{y}} + \pi_\theta\rb{y \mid x} r_\phi\rb{x, y}
        -\beta \piref\rb{{y} \mid x} f\rb{\frac{\pi_\theta\rb{y \mid x}}{\pi_{\text{ref}}\rb{y \mid x}}} 
        } \\ 
        & = r_\phi\rb{x, \hat{y}} \rb{\pi_\theta\rb{y \mid x} - \pi'_\theta\rb{y \mid x}} - r_\phi\rb{x, {y}} \rb{\pi_\theta\rb{y \mid x} - \pi'_\theta\rb{y \mid x}} \\ 
        & - \beta \piref\rb{y \mid x} f\rb{c} + \beta \piref\rb{y \mid x} f\rb{\frac{\pi_\theta\rb{y \mid x}}{\pi_{\text{ref}}\rb{y \mid x}}}.
    \end{align*}

    Now, since \(c\) is the unique global minimum of \(f\), we have \(f\rb{c} < f\rb{\frac{\pi_\theta\rb{y \mid x}}{\pi_{\text{ref}}\rb{y \mid x}}}\), and by our assumption \(\piref > 0\), which implies that
    \begin{equation*}
        g\rb{\pi'_\theta} - g\rb{\pi_\theta} >
         r_\phi\rb{x, \hat{y}} \rb{\pi_\theta\rb{y \mid x} - \pi'_\theta\rb{y \mid x}}
         - r_\phi\rb{x, {y}} \rb{\pi_\theta\rb{y \mid x} - \pi'_\theta\rb{y \mid x}} \; .
    \end{equation*}
    Now, recalling that \(r_\phi\rb{x, \hat{y}} = \max_{y' \notin S_x} r_\phi\rb{x, y'} \ge r_\phi\rb{x, y}\) by the assumption of the Lemma and that \(\pi_\theta\rb{y \mid x} > \pi'_\theta\rb{y \mid x}\) by construction of \(\pi'_\theta\), we get that $g\rb{\pi'_\theta} - g\rb{\pi_\theta} > 0$ in contradiction to the optimality of \(\pi_\theta\).
\end{proof}

\section{Technical Lemmas and Their Proofs}

We denote and define the \(n-1\) dimensional simplex by \(\Delta^n = \cb{\mathbf{p} \in \R_{+}^n \mid \sum_{i=1}^n p_i = 1 }\).
\begin{lemma}\label{lemma: derivative equal for positive entries}
    Let \(g : \Delta^n \to \R\) be a function and let \(\mathbf{p^*}\) be an optimal solution of the problem 
    \begin{equation}
        \max_{\mathbf{p} \in \Delta^n} g\rb{\mathbf{p}} \; .
    \end{equation}
    Suppose that \(g\) is differentiable at \(\mathbf{p^*}\). Then, for any \(i, j \in \bb{n}\) for which \(p^*_i, p^*_j >0\), we have \(\pd{g\rb{\mathbf{p^*}}}{p_i} = \pd{g\rb{\mathbf{p^*}}}{p_j}\).
\end{lemma}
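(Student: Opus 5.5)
The plan is a first-order perturbation argument along feasible directions of the simplex. Fix $i,j$ with $p^*_i, p^*_j > 0$ and consider the direction $\mathbf{d} = \mathbf{e_i} - \mathbf{e_j}$. For every real $\varepsilon$ with $|\varepsilon| < \min\{p^*_i, p^*_j\}$, the perturbed vector $\mathbf{p}^\varepsilon = \mathbf{p^*} + \varepsilon \mathbf{d}$ stays in $\Delta^n$. Its coordinates still sum to one, and only the $i$-th and $j$-th entries change, each remaining nonnegative. Define the one-dimensional function $h(\varepsilon) = g(\mathbf{p}^\varepsilon)$ on this open interval around $0$. Since $\mathbf{p^*}$ is a maximizer of $g$ over $\Delta^n$, the point $\varepsilon = 0$ is an interior maximizer of $h$.

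Next I use differentiability of $g$ at $\mathbf{p^*}$, understood as differentiability of some extension of $g$ to a neighborhood, or equivalently the existence of partial derivatives that give a valid linear approximation along feasible directions. By the chain rule, or directly from the definition of the (Fréchet) derivative, $h$ is differentiable at $0$ with
\[
h'(0) = \nabla g(\mathbf{p^*})^T \mathbf{d} = \pd{g\rb{\mathbf{p^*}}}{p_i} - \pd{g\rb{\mathbf{p^*}}}{p_j}.
\]
Fermat's theorem for interior extrema of a real function differentiable at the extremum then gives $h'(0) = 0$, which is exactly the claimed equality. Concretely, the one-sided difference quotients $\frac{h(\varepsilon)-h(0)}{\varepsilon}$ are $\le 0$ for $\varepsilon>0$ and $\ge 0$ for $\varepsilon<0$. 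Both converge to $h'(0)$, which forces $h'(0)=0$.

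The only delicate point is interpreting what ``$g$ differentiable at $\mathbf{p^*}$'' means when $g$ is defined only on $\Delta^n$. The domain has empty interior in $\R^n$, and partial derivatives in the coordinate directions $\mathbf{e_i}$ alone leave the simplex. I would state explicitly that this means $g$ extends to a function differentiable at $\mathbf{p^*}$ on an open neighborhood, as is the case for the objectives in \eqref{eq: RLHF objective f-divergence} and \eqref{eq: partial sum RLHF LM f-divergence}: these are given by formulas in $\mathbf{p}$ that are smooth wherever the relevant coordinates are positive, since $f$ is $C^1$ on $\R_{++}$. With that convention, the directional derivative along $\mathbf{d}$ equals the difference of partials, and the argument above goes through without further obstacles.
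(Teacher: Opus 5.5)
Your proof is correct and is essentially the paper's argument. Both perturb $\mathbf{p^*}$ along $\mathbf{e_i}-\mathbf{e_j}$, which is feasible for small steps because $p^*_i, p^*_j>0$, and use that the directional derivative equals the difference of the two partials; the paper argues by contradiction with a one-sided step and handles the other inequality by symmetry, while you apply Fermat's theorem to the one-variable restriction. Your explicit convention that differentiability at $\mathbf{p^*}$ means differentiability of an extension to an open neighborhood is a useful clarification that the paper leaves implicit when it writes $\nabla g(\mathbf{p^*})$.
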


\begin{proof}
    Suppose in contradiction \(\pd{g\rb{\mathbf{p^*}}}{p_i} < \pd{g\rb{\mathbf{p^*}}}{p_j}\). 
    Let \(\mathbf{v} = \mathbf{e}_j - \mathbf{e}_i\), where $\mathbf{e}_i$ and $\mathbf{e}_j$ are the \(i\)-th and $j$-th standard basis vectors in $\R^{n}$. Then, we have
    \begin{equation*}
        \mathbf{v}^T \nabla g\rb{\mathbf{p^*}} = \pd{g\rb{\mathbf{p^*}}}{p_j} - \pd{g\rb{\mathbf{p^*}}}{p_i} > 0 \; .
    \end{equation*}
    Since \(g\) is differentiable at \(\mathbf{p^*}\), using the definition of the directional derivative of \(g\) along the vector \(\mathbf{v}\) (see Section 1.5.1 in \citet{BeckIntro}), we get that
    \begin{equation*}
        0 < \mathbf{v}^T \nabla g\rb{\mathbf{p^*}} = \lim_{h \to 0} \frac{g\rb{\mathbf{p^*} + h \mathbf{v}} - g\rb{\mathbf{p^*}}}{h} \; .
    \end{equation*}
    Therefore, for a small enough \(h > 0\), we have 
    \begin{equation*}
        g\rb{\mathbf{p^*} + h \mathbf{v}} > g\rb{\mathbf{p^*}} \; .
    \end{equation*}
    Furthermore, since \(p^*_i, p^*_j >0\), there exsits a small enough \(h > 0\) such that \(\mathbf{p^*} + h \mathbf{v} \in \Delta^n\). These two facts contradict the optimality of \(\mathbf{p^*}\). Since similar arguments will derive a contradiction in the case where \(\pd{g\rb{\mathbf{p^*}}}{p_i} > \pd{g\rb{\mathbf{p^*}}}{p_j}\), we obtain the desired result.
    \end{proof}

\begin{lemma}\label{lemma : solution to wierd problem}
Let $f: \mathbb{R}_+ \to \mathbb{R}$ be a convex DPO-inducing function, which is differentiable and \(f'\) is invertible. Let $S$ be a strict subset of \(\bb{n} \coloneqq \cb{1,\ldots,n}\). For given $\mathbf{r} \in \mathbb{R}^n$, $\mathbf{q} \in \Delta^n$, $\beta > 0$, we consider the following problem:

\begin{equation}\label{eq: wierd problem abstract}
    \max_{\mathbf{p} \in \Delta^n} \left\{ \sum_{i=1}^n p_i r_i - \beta \sum_{i \in S} q_i f\rb{p_i / q_i} \right\}.
\end{equation}

Denote $\hat{r} = \max_{i \notin S} r_i$ and $z_{i} = q_i f'^{-1}\left((r_i-\hat{r})/\beta\right)$ for any $i \in S$. If $\sum_{i \in S} z_i < 1$, then the set of optimal solutions to \eqref{eq: wierd problem abstract} can be described as all vectors $p$ satisfying 

\begin{align}\label{lemma : solution to wierd problem:1}
\begin{cases}
    p_i = z_{i}, & \forall i \in S, \\[5pt]
    p_i = 0, & \forall i \notin S, \, r_i < \hat{r} \\[5pt]
    \sum\limits_{\substack{i \notin S \\ r_i=\hat{r}}} p_i = 1 - \sum_{i \in S} z_i. &
\end{cases}
\end{align}

Otherwise, the unique optimal solution to \eqref{eq: wierd problem abstract} is:
\begin{align}\label{lemma : solution to wierd problem:2}
\begin{cases}
    p_i = q_i f'^{-1}\left(\frac{r_i+\mu}{\beta}\right), & \forall i \in S, \\[5pt]
    p_i = 0, & \forall i \notin S.
\end{cases}
\end{align}
where $\mu$ is a solution to $\sum_{i \in S} q_i f'^{-1}\left((r_i+\mu)/\beta\right) = 1$.
\end{lemma}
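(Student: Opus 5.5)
Since $f$ is convex and differentiable, the objective in \eqref{eq: wierd problem abstract} is concave on the convex set $\Delta^n$, and the constraints are affine. KKT conditions are therefore necessary and sufficient, and I would argue directly from them.

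\textbf{KKT system.} Introduce a multiplier $\mu$ for $\sum_i p_i=1$ and $\lambda_i\ge 0$ for $p_i\ge 0$. The stationarity conditions read
\begin{equation*}
r_i-\beta f'\rb{p_i/q_i}+\mu+\lambda_i=0 \;\; (i\in S), \qquad r_i+\mu+\lambda_i=0 \;\; (i\notin S),
\end{equation*}
with complementary slackness $\lambda_ip_i=0$.

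\textbf{Step 1: positivity on $S$.} Because $f$ is DPO-inducing, Corollary \ref{corollary: interior-inducing iff L = - infty} gives $f'(t)\to-\infty$ as $t\to 0^+$; the limit exists since $f'$ is monotone. The perturbation argument in the proof of Theorem \ref{theorem: interior inducing} (moving mass $\varepsilon$ into a zero coordinate $i\in S$) then shows that every optimum has $p_i>0$ for $i\in S$. Hence $\lambda_i=0$ there, and $p_i=q_i f'^{-1}\rb{(r_i+\mu)/\beta}$ for $i\in S$. Strictly speaking, $f'$ may not exist at $0$, so I would either carry out this step before writing KKT or treat the $S$-coordinates via one-sided derivatives.

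\textbf{Step 2: the coordinates outside $S$.} For $i\notin S$ we have $\lambda_i=-(r_i+\mu)\ge 0$, so $\mu\le-\hat r$. If some $i\notin S$ has $p_i>0$, then $\mu=-r_i$, which forces $r_i=\hat r$ and $\mu=-\hat r$.

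\textbf{Step 3: the two cases.} Set $h(\mu)=\sum_{i\in S}q_if'^{-1}\rb{(r_i+\mu)/\beta}$. Since $f'$ is invertible and convexity makes it increasing, $f'^{-1}$ is strictly increasing and so is $h$. Note $h(-\hat r)=\sum_{i\in S} z_i$.
\begin{itemize}
\item If $\sum_{i\in S} z_i<1$, take $\mu=-\hat r$. Then $p_i=z_i$ on $S$. Coordinates $i\notin S$ with $r_i<\hat r$ have $\lambda_i>0$, so $p_i=0$. The remaining mass $1-\sum_{i\in S} z_i$ may be split arbitrarily among the maximizers of $r$ outside $S$, and every such split satisfies KKT, so all of them are optimal. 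Conversely, $\mu<-\hat r$ would give all $p_i=0$ off $S$ and $h(\mu)<h(-\hat r)<1$, which is infeasible. So this is exactly the set in \eqref{lemma : solution to wierd problem:1}.
\item If $\sum_{i\in S} z_i\ge 1$, then $\mu=-\hat r$ is impossible whenever some $p_i>0$ off $S$, since then $\sum_{i\in S}p_i<1\le h(-\hat r)$. Hence $p_i=0$ off $S$ and $h(\mu)=1$. Strict monotonicity of $h$ gives a unique $\mu$, which satisfies $\mu\le-\hat r$, so $\lambda_i\ge 0$ holds automatically. This yields \eqref{lemma : solution to wierd problem:2}.
\end{itemize}

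\textbf{Main obstacle.} The delicate points are domain issues: $f'$ may not be defined at $0$, and $f'^{-1}$ has a restricted domain. The existence of $\mu$ with $h(\mu)=1$ requires the range of $h$ to contain $1$. The DPO-inducing property gives $f'^{-1}\to 0$ at the lower end of its domain. At the upper end, I would show that either $h$ grows past $1$ or an optimum exists by compactness and Weierstrass; any optimum satisfies KKT, which supplies such a $\mu$. Using existence via compactness in this way sidesteps most of the range analysis.
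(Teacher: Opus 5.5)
Your proposal is correct and follows essentially the same route as the paper: KKT is necessary and sufficient by concavity, the DPO-inducing property forces $p_i>0$ on $S$, and the two cases are settled by $\mu\le-\hat r$ together with the monotonicity of $f'^{-1}$. Your version is slightly more careful than the paper's. You rerun the perturbation argument directly on the partial-sum objective, whereas the paper cites interior-inducing, which is stated for the full-divergence problem. You also explicitly check uniqueness of $\mu$, that $\lambda_i\ge 0$ in the second case, and existence of an optimum via compactness.
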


\begin{proof}
    Since the problem is convex (since $f$ is convex) and Slater's condition holds, the set of optimal solutions coincides with the set of KKT points. The Lagrangian in this case is given by
    \begin{equation}
        L\rb{\mathbf{p}, \boldsymbol{\lambda}, \mu} = \ip{\mathbf{r}}{\mathbf{p}} - \beta \sum_{i \in S} q_i f\rb{\frac{p_i}{q_i}} + \ip{\boldsymbol{\lambda}}{\mathbf{p}} + \mu \rb{\sum_{i=1}^n p_i - 1} \; ,
    \end{equation}
    where $\boldsymbol{\lambda} \in \R^n_+$ and $\mu \in \R$. The KKT conditions are

\begin{align*}
\begin{cases}
    \pd{L}{p_i} = r_i - \beta f'\rb{\frac{p_i}{q_i}} + \lambda_i + \mu = 0, & \forall i \in S, \\[5pt]
    \pd{L}{p_i} = r_i + \lambda_i + \mu= 0, & \forall i \notin S, \\[5pt]
    p_i \geq 0, \quad \sum_{i=1}^{n} p_i = 1, & \text{(feasibility)}, \\[5pt]
    \lambda_i \geq 0, & \forall i \in \bb{n}, \quad \text{(multipliers)}, \\[5pt]
    \lambda_i p_i = 0, & \forall i \in \bb{n}, \quad \text{(complementary slackness)}.
\end{cases}
\end{align*}

Since we assume that \(f\) is DPO-inducing, by Lemma \ref{lemma: DPO-inducing iff interior-inducing} we get that it is also interior-inducing, and therefore any solution \(\mathbf{p}\) to \eqref{eq: wierd problem abstract} satisfies \(p_i > 0\) for all $i \in S$. Therefore $\lambda_i = 0$ for all $i \in S$, and the KKT conditions can be equivalently written as:
\begin{align*}
\begin{cases}
    p_i = q_i f'^{-1}\rb{\frac{r_i + \mu}{\beta}}, & \forall i \in S.\\[5pt]
    r_i + \lambda_i + \mu= 0, & \forall i \notin S, \\[5pt]
    \mathbf{p} \in \Delta^n, & \text{(feasibility)}, \\[5pt]
    \lambda_i \geq 0, & \forall i \notin S, \quad \text{(multipliers)}, \\[5pt]
    \lambda_i p_i = 0, & \forall i \notin S, \quad \text{(complementary slackness)}.
\end{cases}
\end{align*}

We now split the proof into the two cases as mentioned in the lemma. Recall that $\hat{r} = \max_{i \notin S} r_i$ and $z_{i} = q_i f'^{-1}\left((r_i-\hat{r})/\beta\right)$.
\begin{itemize}
    \item Assume that $\sum_{i \in S} z_i < 1$.
    First, the points described in \eqref{lemma : solution to wierd problem:1} are all KKT points with $\mu = - \hat{r}$ and
        \begin{align*}
        \begin{cases}
            \lambda_i =0, & \forall i \notin S, \, r_i = \hat{r}, \\[5pt]
            \lambda_i = \hat{r} - r_i, & \forall i \notin S, \, r_i < \hat{r}.
        \end{cases}
        \end{align*}        
        It remains to show that no other KKT points exist. To this end, let $\mathbf{p}$ be any KKT point. We will prove that $\mathbf{p}$ coincides with one of the solutions in \eqref{lemma : solution to wierd problem:1}. By feasibility, it suffices to establish that $p_i = z_{i}$ for all $i \in S$ and $p_i = 0$ for all $i \notin S$ such that $r_i < \hat{r}$.

        For any $i \notin S$, we have that $r_i + \lambda_i + \mu= 0$. Since $\lambda_{i} \geq 0$, we have that $\mu \leq -r_{i}$ for all $i \notin S$ and therefore also $\mu \leq -\hat{r}$. If $p_i = 0$ for all $i \notin S$, since $\sum_{i \in S} z_i < 1$, we obtain a contradiction since
        \begin{equation*}
            1 = \sum_{i \in S} p_i = \sum_{i \in S} q_i f'^{-1}\left(\frac{r_i+\mu}{\beta}\right)
            \le \sum_{i \in S} q_i f'^{-1}\left(\frac{r_i-\hat{r}}{\beta}\right) = \sum_{i \in S} z_i < 1,
        \end{equation*}
        where the first inequality is because $f'^{-1}$ is increasing (by convexity of $f$) and since $\mu \le -\hat{r}$. 
\medskip

        Hence some $p_i > 0$ for $i \notin S$. Complementary slackness implies $\lambda_i = 0$ and thus $r_{i} = -\mu$. Moreover, $r_i = r_j + \lambda_j$ for all $j \notin S$, since $r_{j} + \lambda_{j} = -\mu$. Since $\lambda_{j} \geq 0$, we have that $r_i \geq r_j$ for all $j \notin S$, which means that $r_i = \hat{r}$. Hence $\mu = - \hat{r}$, which means that $p_{i} = z_{i}$ as desired.
    
    \item Assume that $\sum_{i \in S} z_i \geq 1$. In this case, we prove that the unique KKT point is the point given in \eqref{lemma : solution to wierd problem:2}. Note that by feasibility it is enough to show that $p_i = 0$ for all $i \notin S$.

    Suppose, in contradiction, that $p_\ell > 0$ for some $\ell \notin S$. Then, $\lambda_\ell = 0$ and hence $\mu = -r_\ell$. Moreover,
    \begin{equation}
        1 \ge p_\ell + \sum_{i \in S} q_i f'^{-1}\left(\frac{r_i-r_\ell}{\beta}\right) \ge 
        p_\ell + \sum_{i \in S} q_i f'^{-1}\left(\frac{r_i-\hat{r}}{\beta}\right) \ge 
        p_\ell + 1 > 1 \; ,
    \end{equation}
    where the first inequality is by feasibility of $\mathbf{p}$, the second inequality is since $f'^{-1}$ is increasing (by convexity of $f$) and $r_\ell \le \hat{r}$, and the penultimate inequality is a rearrangement of the assumption of Case II (recall that $z_{i} = q_i f'^{-1}\left((r_i-\hat{r})/\beta\right)$).
\end{itemize}
\end{proof}

\end{document}